\def\eqref#1{equation~\ref{#1}}
\def\1{\bm{1}}
\DeclareMathAlphabet{\mathsfit}{\encodingdefault}{\sfdefault}{m}{sl}
\SetMathAlphabet{\mathsfit}{bold}{\encodingdefault}{\sfdefault}{bx}{n}
\title{SPACT18: Spiking Human Action Recognition Benchmark Dataset with Complementary RGB and Thermal Modalities}
\author{
Yasser Ashraf, Ahmed Sharshar, Velibor Bojkovic, Bin Gu \\
Department of Machine Learning \\
Mohamed bin Zayed University of Artificial Intelligence\\
Abu Dhabi, UAE \\
\texttt{\{yasser.attia,ahmed.sharshar, velibor.bojkovic, bin.gu \}@mbzuai.ac.ae} 
}
\begin{document}

\maketitle

\begin{abstract}

Spike cameras, bio-inspired vision sensors, asynchronously fire spikes by accumulating light intensities at each pixel, offering ultra-high energy efficiency and exceptional temporal resolution. Unlike event cameras, which record changes in light intensity to capture motion, spike cameras provide even finer spatiotemporal resolution and a more precise representation of continuous changes. 
In this paper, we introduce the first video action recognition (VAR) dataset using spike camera, alongside synchronized RGB and thermal modalities, to enable comprehensive benchmarking for Spiking Neural Networks (SNNs). By preserving the inherent sparsity and temporal precision of spiking data, our three datasets offer a unique platform for exploring multimodal video understanding and serve as a valuable resource for directly comparing spiking, thermal, and RGB modalities. This work contributes
a novel dataset that will drive research in energy-efficient, ultra-low-power video
understanding, specifically for action recognition tasks using spike-based data.

\end{abstract}

\section{Introduction}


Video Action Recognition (VAR) is a key task in computer vision that focuses on detecting and classifying actions or activities from video sequences automatically \citep{var_def_2021}. Unlike image classification, which can be seen as analysis of an individual frame, VAR captures both spatial and temporal dynamics, adding complexity such as  heavy information redundancy introduced by the temporal dimension, motion variations, changes in lighting or camera angles \citep{var_complexity_2024}, just to name a few. These complexities make VAR highly valuable for real-world applications such as surveillance, healthcare, industrial automation, and sports analysis \citep{var_applications}, where rapid and accurate processing is essential.


Spiking Neural Networks (SNNs), inspired by biological neurons, offer a promising alternative to traditional Artificial Neural Networks (ANNs) for tasks involving temporal dynamics, such as VAR \citep{snn_review_2023}. SNNs operate on sparse, discrete spikes, enabling event-driven computation, which significantly reduces energy consumption compared to ANNs \citep{roy2019towards}. This makes SNNs particularly suited for energy-constrained environments like autonomous robots and edge devices \citep{snns_edge_2022}. However, despite their energy efficiency, video understanding with SNNs remains limited. Current research on VAR using SNNs typically relies on converting conventional RGB video data into spike trains, leading to information loss and constraining the full potential of SNNs \citep{var_complexity_2024}.

A key limitation in existing research is the scarcity of spiking datasets that capture the rich temporal dynamics of real-world video sequences. While event cameras capture data based on brightness changes, spike cameras operate by generating spikes when the photon accumulation at each pixel surpasses a threshold. This enables spike cameras to capture absolute brightness at ultra-high sampling frequencies (up to 20,000 Hz), providing textural spatiotemporal details than event cameras \citep{dvs128_2021}. Despite these advantages, existing datasets, such as DVS128 (see Section \ref{sec var datasets}), are based on event cameras and lack the richness needed for complex action recognition tasks.

In the context of SNNs, video data presents unique challenges. Videos inherently involve temporal sequences, which align naturally with the temporal nature of SNNs. Any temporally structured data can be treated as a video, provided that each time step has a corresponding visual representation. When processing temporal input with an SNN, two distinct approaches arise: (1) aligning the temporal dimension of the input with the native temporal axis of the SNN model, or (2) encoding the entire video input separately from the SNN’s native temporal axis. In the latter approach, the video is encoded as spike train as a whole, hence another temporal dimension is added which coincides with the SNN’s temporal axis.

Motivated by the lack of a complex spiking dataset suitable to challenge and push forward the SNNs capacity to address VAR task, we introduce a novel dataset specifically designed for human action recognition using spike cameras. In addition, we provide spiking data with synchronized RGB and thermal modalities to enhance motion capture in low-light conditioning and provide a comprehensive multimodal representation of human actions. By combining these modalities, we provide a framework to explore the complementary information across different sensor types, with the aim of improving the robustness and diversity of SNN models for action recognition tasks.


Our datasets are collected from 44 participants, representing a wide range of demographic factors such as age, height, weight, sex, and ethnicity. Each participant performed 18 distinct daily actions (Figure ~\ref{activities_overview}), captured over two sessions, resulting in a total dataset duration of 264 minutes. To establish a baseline for comparison, we utilized state-of-the-art (SOTA) architecture based on convolutional neural networks (CNNs) and transformer blocks to provide a baseline model for our datasets in both ANN and SNN through ANN-SNN conversion. To summarize, our key contributions are as follows:

\begin{itemize}

    \item We introduce SPACT18, the first VAR dataset captured using spike camera, setting a new benchmark for SNN-based models, and extend it with synchronized RGB and thermal modalities for comprehensive multimodal benchmarking.

    \item     We propose a compression algorithm for the spiking dataset, yielding new spiking datasets with lower native latency, while preserving critical temporal information, providing a framework for preprocessing and compression for the research community.



    \item 
    We evaluate our dataset across modalities using SOTA lightweight ANN models and SNN baseline obtained through ANN-SNN conversion, and direct training, highlighting critical challenges in spiking video recognition, such as the high latency in ANN-SNN and low accuracy in direct training, and providing novel research areas for optimizing SNNs.

\end{itemize}

\begin{figure*}[ht]
    \centering
    \setlength{\tabcolsep}{3pt} 
    \begin{tabular}{cccccc}
        \scriptsize Running in Place & 
        \scriptsize Walking & 
        \scriptsize Jogging & 
        \scriptsize Clapping & 
        \scriptsize Right Hand Waving& 
        \scriptsize Left Hand Waving\\
        \includegraphics[width=0.15\textwidth]{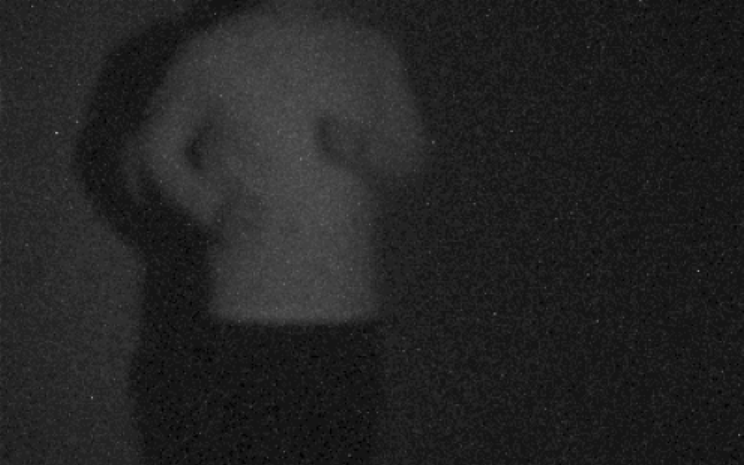} & 
        
        \includegraphics[width=0.15\textwidth]{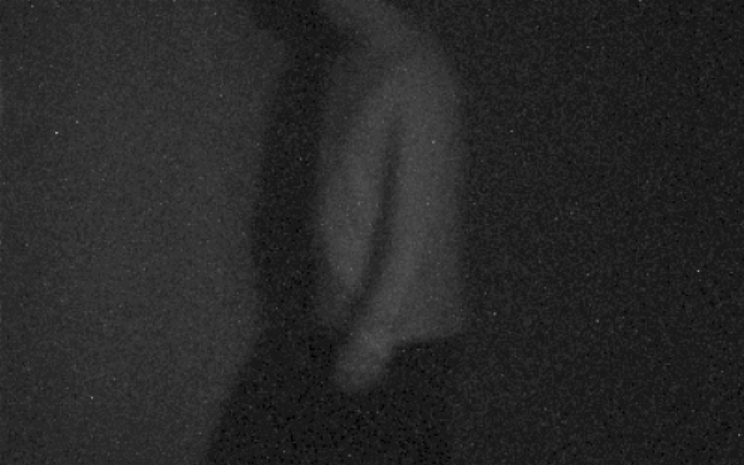} & 
        
        \includegraphics[width=0.15\textwidth]{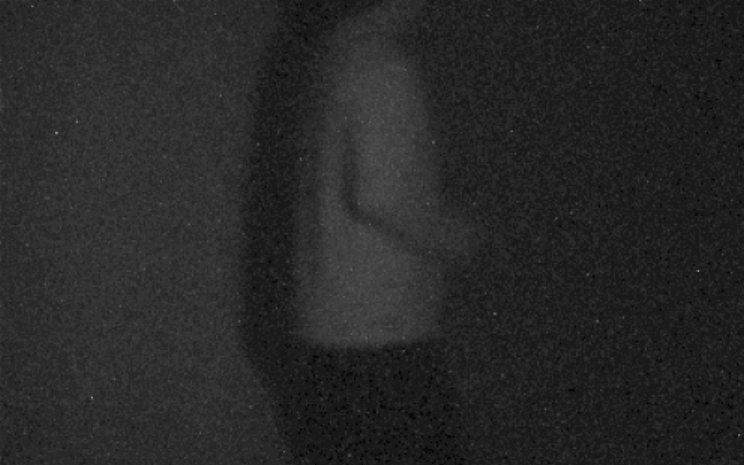} &
        
        \includegraphics[width=0.15\textwidth]{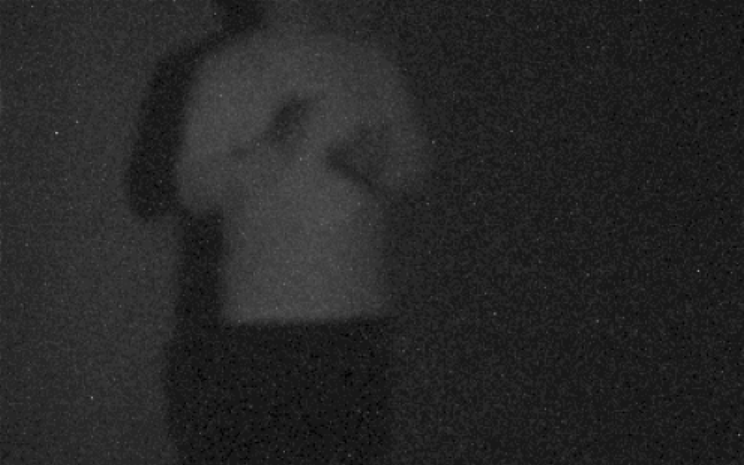} &
        
        \includegraphics[width=0.15\textwidth]{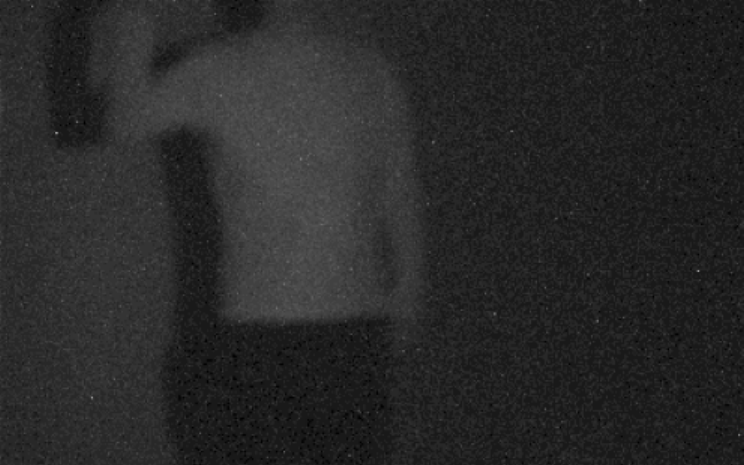} &
        
        \includegraphics[width=0.15\textwidth]{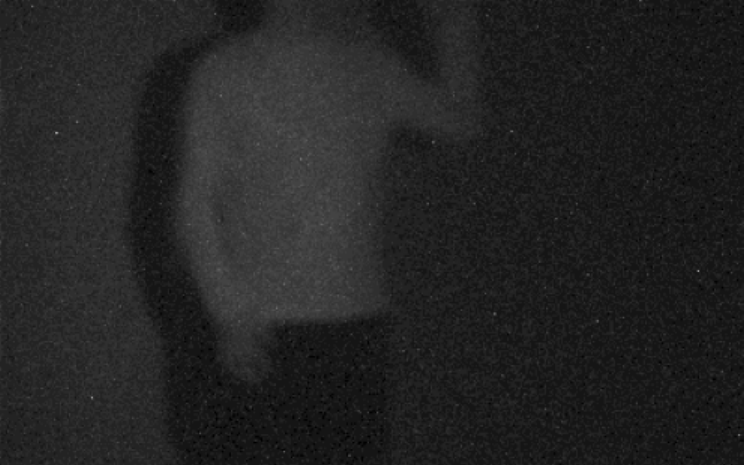} \\

        \scriptsize Drinking & 
        \scriptsize Playing Drums & 
        \scriptsize Forearm Roll & 
        \scriptsize Playing Guitar& 
        \scriptsize Jump in Place & 
        \scriptsize Squats \\
        
        \includegraphics[width=0.15\textwidth]{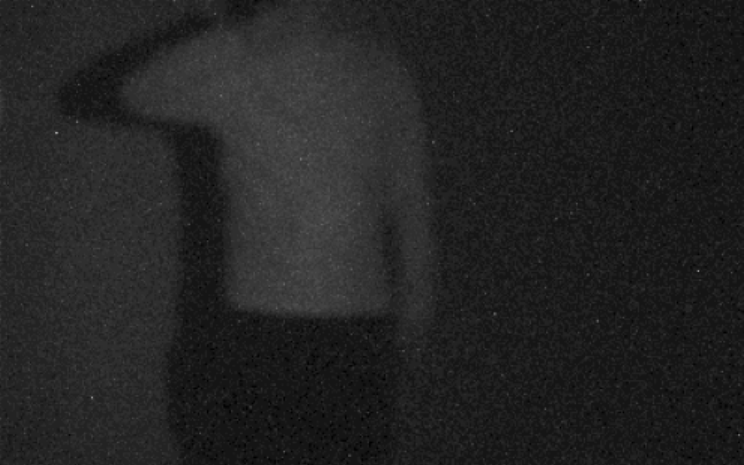} & 
        
        \includegraphics[width=0.15\textwidth]{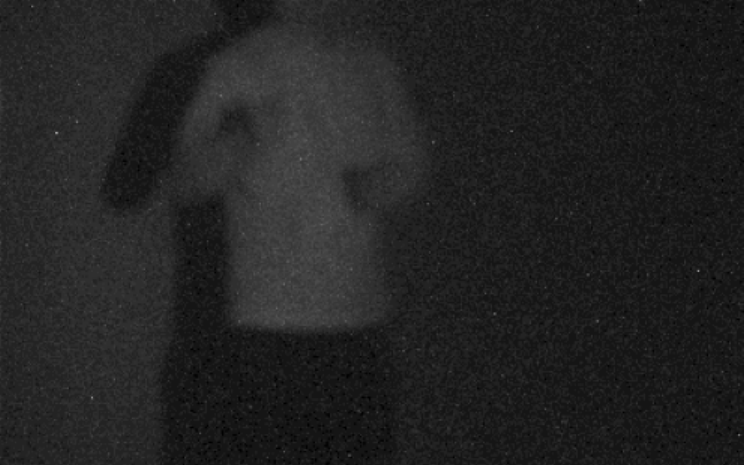} & 
        
        \includegraphics[width=0.15\textwidth]{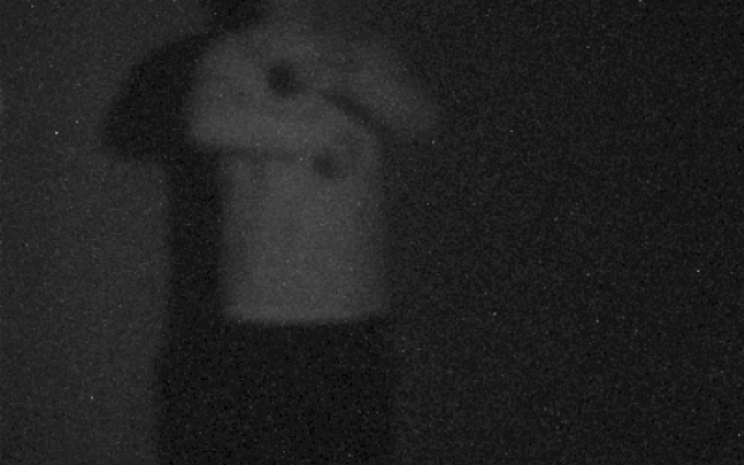} &
        
        \includegraphics[width=0.15\textwidth]{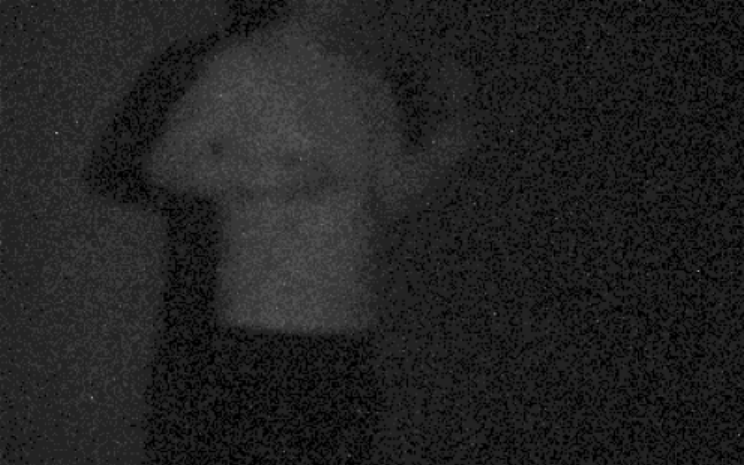} &
        
        \includegraphics[width=0.15\textwidth]{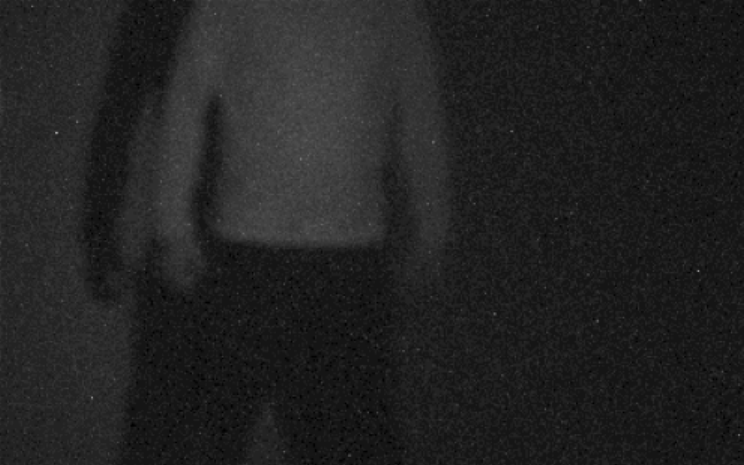} &
        
        \includegraphics[width=0.15\textwidth]{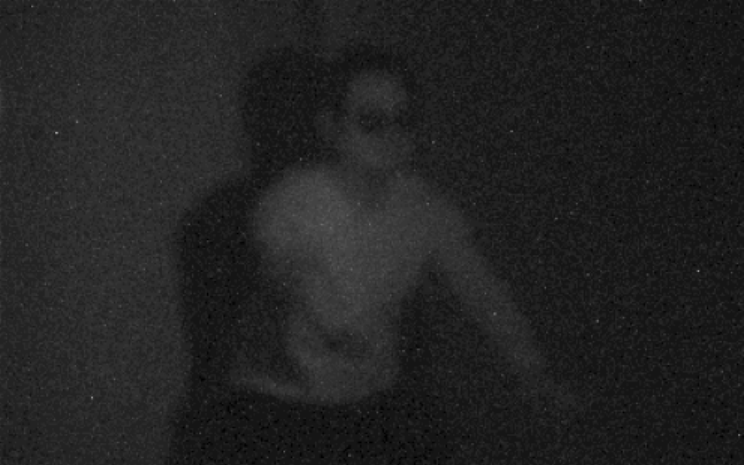} \\

        \scriptsize Arms Circling & 
        \scriptsize Side Butterfly& 
        \scriptsize Frontal Butterfly & 
        \scriptsize Stand Abs & 
        \scriptsize Boxing & 
        \scriptsize Jumping Jacks\\
        
        \includegraphics[width=0.15\textwidth]{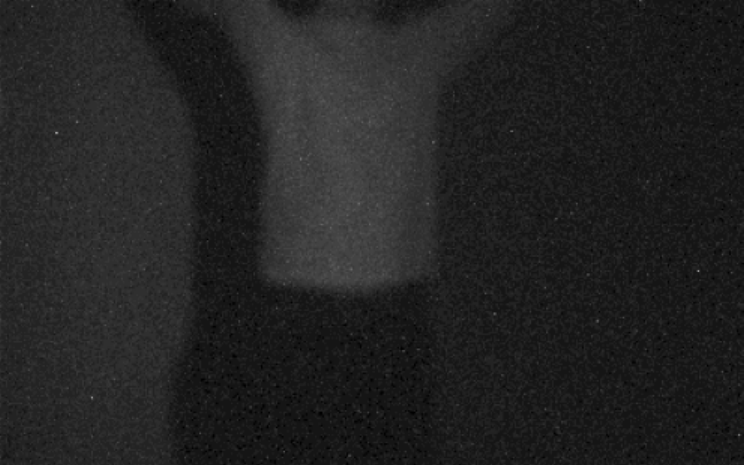} & 
        
        \includegraphics[width=0.15\textwidth]{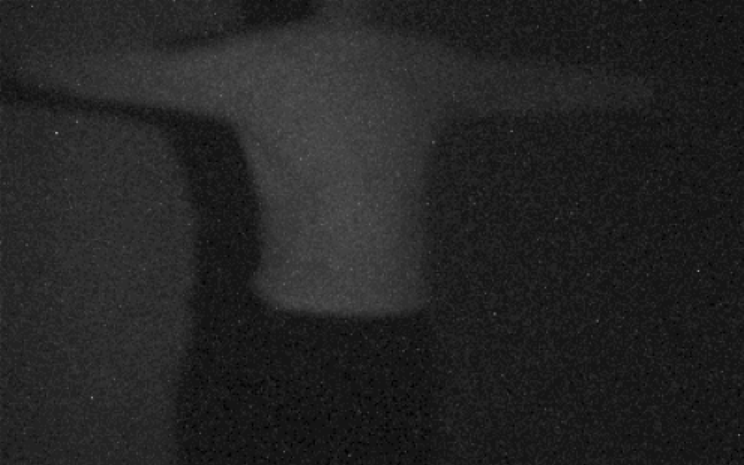} & 
        
        \includegraphics[width=0.15\textwidth]{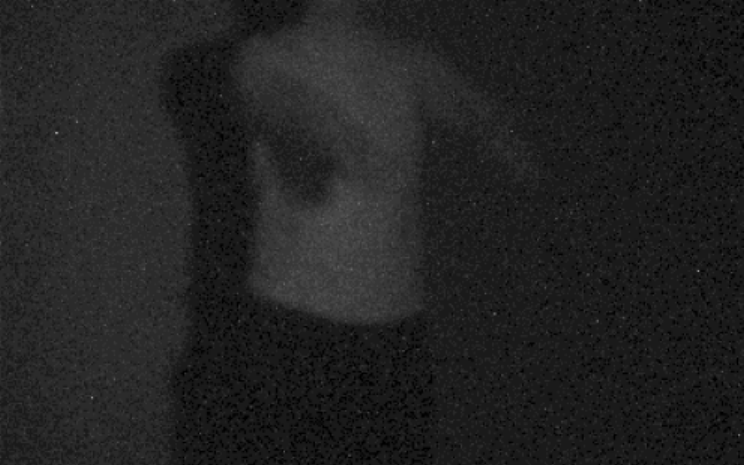} &
        
        \includegraphics[width=0.15\textwidth]{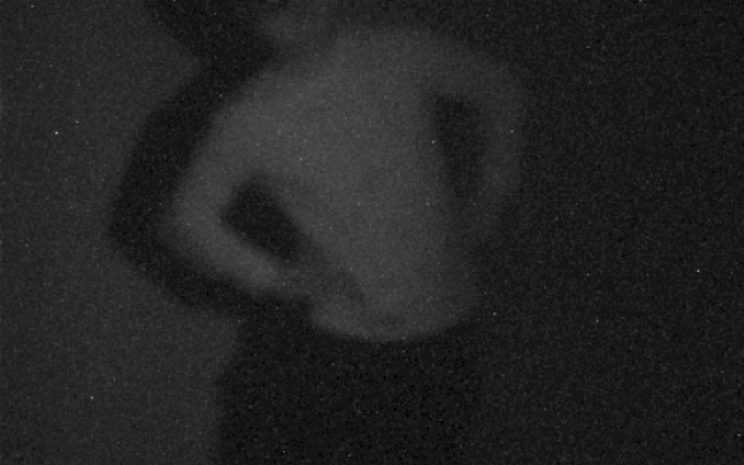} &
        
        \includegraphics[width=0.15\textwidth]{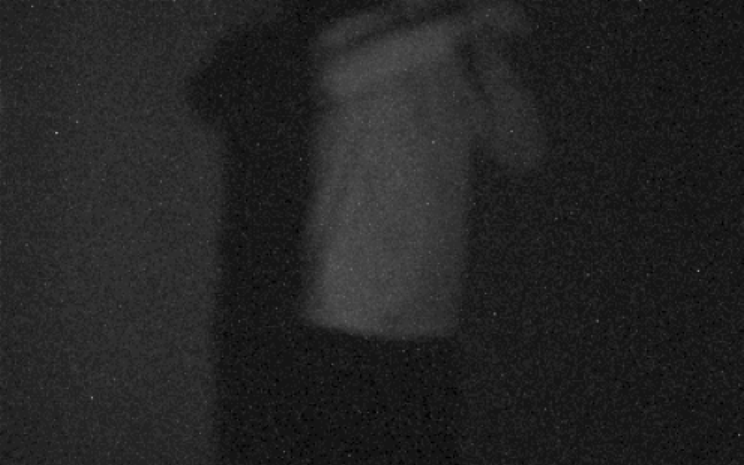} &
        
        \includegraphics[width=0.15\textwidth]{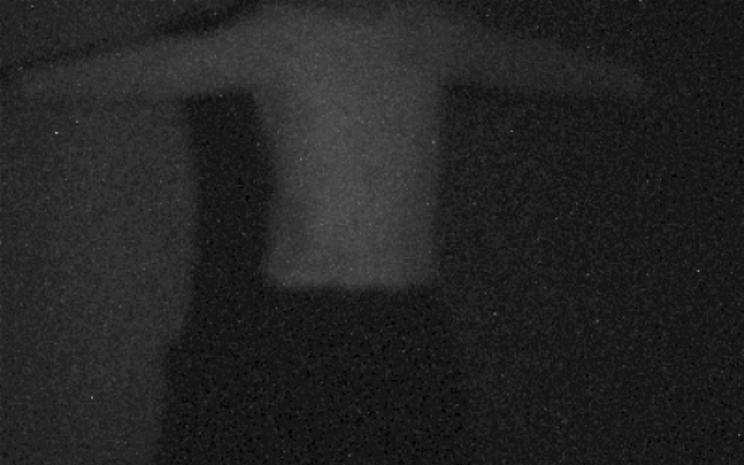} \\

    \end{tabular}
    \caption{Sample output frame from the spike camera for each action of the same subject. Texture reconstruction via TFP \citep{dong2021spike} with window=200.}
    \label{activities_overview}
\end{figure*}

\section{Related Work}

\subsection{Video Action Recognition Datasets} \label{sec var datasets}
Video understanding has become a crucial component of computer vision, particularly due to the rise of short video platforms. To advance this field, datasets like KTH, simple human actions like walking and running \citep{kth}, HMDB51, human actions in movies \citep{hmdb}, UCF101, varied sports and action categories \citep{ucf101}, NTU RGB+D, multi-view human activities with depth data \citep{liu2020ntu}, and Kinetics, diverse real-world action videos \citep{kin1,kin2,kin3}, have been developed for human action recognition, while others focus on action localization \citep{actl1,actl2,actl3,actl4,actl5,actl6}, enabling deeper exploration of human activities. Neuromorphic datasets \citep{wang2024event, Duan_2024_CVPR} like DVS128 \citep{dvs128_2021}, DailyDVS200 \cite{wang2024dailydvs} along others for event-based action recognition \citep{Dong2023Bullying10KAL,Action2023} that utilize event-based cameras to capture spatiotemporal changes, recording only pixel intensity variations, which allows for efficient, real-time, low-power action recognition. Event-based versions of traditional datasets, such as E-KTH, E-UCF11, and E-HMDB51, are synthesized using event cameras or simulators, converting frame-based data into spike trains, which reduces redundancy and enhances processing efficiency, making these datasets ideal for applications requiring high temporal resolution \citep{iacc2,iacc}, see Appendix \ref{subsec:data-analysis} for more details.

\subsection{spike camera Datasets}

Spike cameras have garnered attention for capturing high-speed dynamic scenes, spurring the creation of datasets to improve tasks like motion estimation, depth estimation, and image reconstruction.  \cite{RIR} introduced a dataset for reconstructing visual textures using a retina-inspired sampling method, demonstrating the utility of spike cameras for tasks like object tracking but limiting their scope to low-level applications. Similarly, datasets like Spk2ImgNet and PKU-Spike-Stereo by \citep{SPK2IN} and \citep{WangST} focus on dynamic scene reconstruction and stereo depth estimation, respectively, without addressing high-level tasks such as video classification. Similarly, \cite{hu2022optical} 's SCFlow dataset serves as a benchmark for optical flow estimation, and \cite{MESIA} developed a motion estimation dataset leveraging spike intervals for high-speed motion recovery, both confined to low-level vision tasks.

Despite these advancements, a significant gap remains in developing spike camera datasets for high-level tasks like video action recognition \citep{zhu2019retina, rec1}. Current datasets are predominantly designed for tasks such as optical flow \citep{NEURIPS2022_33951c28}, motion estimation \citep{MEST1}, depth estimation \citep{DEST1} and image reconstruction, limiting the exploration of spike camera data in vision understanding applications, such as event recognition and semantic understanding in dynamic environments. This gap underscores the need for future research on high-level spike-based vision tasks. Moving toward acceleration research in spiking video understanding, we introduce the first multimodal VAR dataset using a spike camera synchronized with RGB and thermal cameras, enhancing the understanding of VAR through complementary modalities.


\subsection{Video Action Recognition Architectures}

Video understanding architectures have evolved from traditional CNNs to more sophisticated models. Early approaches like 3D CNNs, such as X3D \cite{x3d}, extended 2D CNNs to capture both spatial and temporal aspects of videos \cite{3dcnn1, 3dcnn3, 3dcnn4}. Two-Stream \cite{2stream}, TSN \cite{tsn}, and SlowFast \cite{slowfast} further improved action recognition by incorporating spatial-temporal streams, sparse sampling, and parallel networks. Attention-based models, including TimeSformer \cite{tran2} and ViViT \cite{tran1}, enhanced temporal understanding by capturing long-range dependencies. Recent models, like VideoSwin \cite{swin} and UniFormer \cite{uniform1, uniform2}, combine convolution and self-attention for performance optimization.

On the other side, research on SNN-based video classification remains relatively limited. For instance, a reservoir recurrent SNN with 300-time-step spike sequences from UCF101 was proposed by \cite{spk1}, while a heterogeneous recurrent SNN \cite{spk2} showed strong performance on datasets like UCF101, KTH, and DVS-Gesture. \cite{spk3} introduced the two-stream hybrid network (TSRNN), combining CNN, RNN, and a spiking module to enhance RNN memory. Another approach, a two-stream deep recurrent SNN, utilized ANN-to-SNN conversion, integrating channel-wise normalization and tandem learning \cite{spk4}. To address conversion errors, \cite{spk5} developed a dual threshold mapping framework, reducing latency in SNNs using the SlowFast backbone. SVFormer, a spiking transformer model, efficiently balances local feature extraction with global self-attention, achieving SOTA results with minimal power consumption \cite{var_complexity_2024}. Nonetheless, SNNs face challenges in complex model construction, preprocessing, and longer simulation times.




\section{Dataset}

\begin{minipage}{0.45\textwidth}

To build a comprehensive and representative dataset, data were collected from 44 diverse subjects spanning a variety of cultural, gender, and racial backgrounds. The participants exhibited significant variability in key characteristics such as age, weight, height, lifestyle, and other factors, ensuring a robust and heterogeneous sample for our study, further enriching the
dataset, and enhancing the model’s ability to generalize across different demographics. Consent and approval have been taken from all participants to make the data publicly available for research purposes. Figure \ref{fig:framework} summarizes an overview of the data collection process.

\end{minipage}%
\hfill
\begin{minipage}{.5\textwidth}
    \centering

    \includegraphics[width=\textwidth]{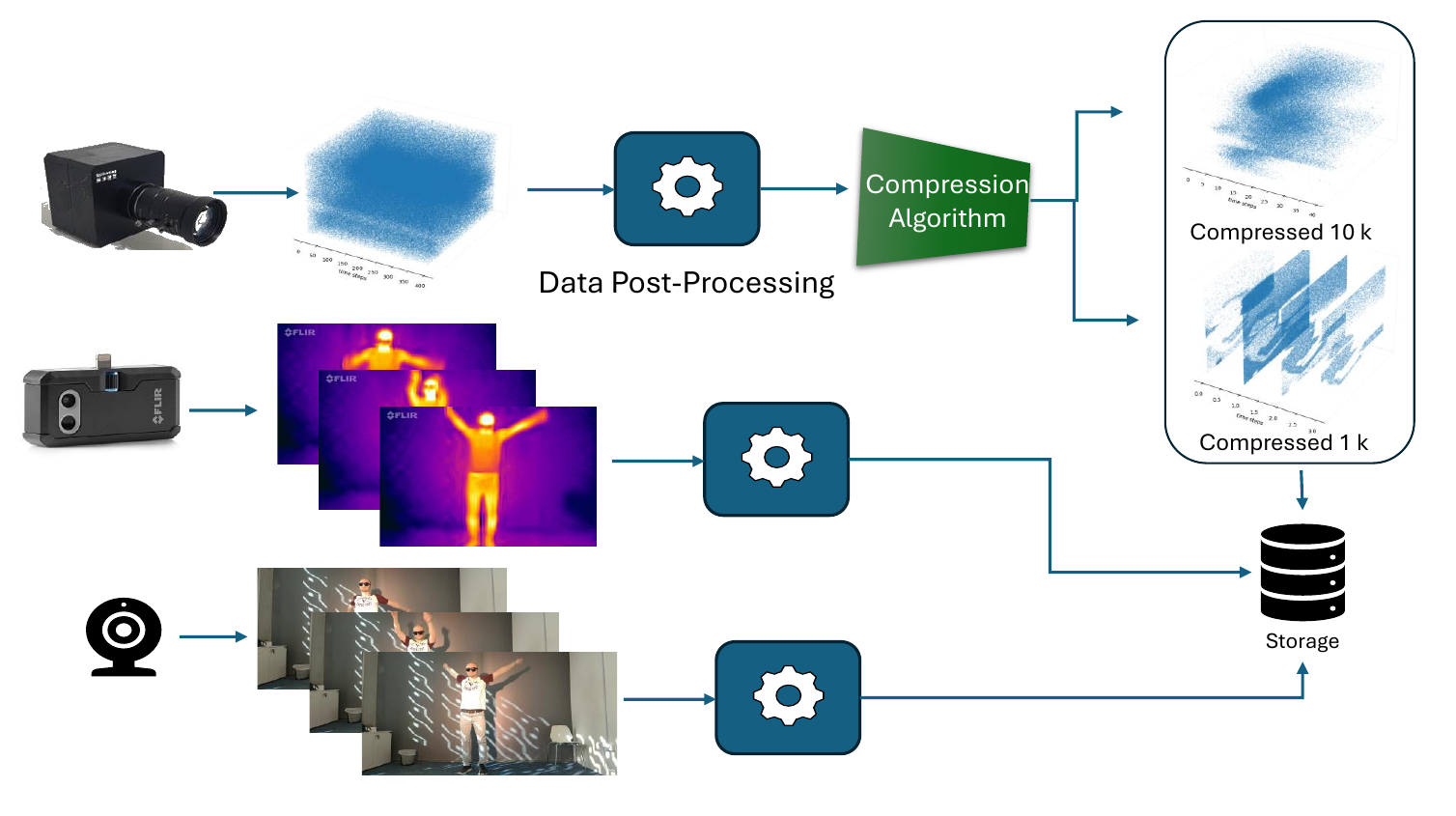}

    \captionsetup[figure]{hypcap=false}
    \captionof{figure}{Multimodal data collection and compression pipeline: Spiking, Thermal, and RGB imaging with post-processing and compression techniques for optimized storage.}
    \label{fig:framework}
\end{minipage}%

\begin{minipage}{0.5\textwidth}

Each participant conducted two separate data collection sessions, performing 18 distinct daily activities during each session, with each activity lasting 10 seconds. This protocol resulted in 180 seconds (3 minutes) of data per session, culminating in a total of 264 minutes ($\sim$ 4.5 hours) of data for each data type (spike, thermal, and RGB). Table ~\ref{tab:dataset_overview} shows an overview of some statistical properties of the participants' numbers for the dataset.

\end{minipage}%
\hfill
\begin{minipage}{.5\textwidth}
  \centering
  \captionsetup[table]{hypcap=false}

  \captionof{table}{Brief overview of the dataset.}
  \label{tab:dataset_overview}
  \begin{tabular}{cc}
  \toprule
  Height (cm) & 174.45$\pm$ 8.23\\

  Weight (kg) & 75.41 $\pm$ 14.22\\

  Age (y) & 27.20 $\pm$ 5.11 \\

  Participants & 44 \\

  Nationalities & 13 \\

  Activities & 18\\

  Sessions & 2 \\

  Samples & 1584\\

  Sample Duration & 10 Seconds\\
  
  \bottomrule
  \end{tabular}
\end{minipage}%

The activities are: \textit{running in place}, \textit{walking}, \textit{jogging}, \textit{clapping}, \textit{waving with right hand}, \textit{waving with left hand}, \textit{drinking}, \textit{playing drums}, \textit{rolling with hands}, \textit{playing guitar}, \textit{jumping}, \textit{squats}, \textit{hand circling}, \textit{side butterfly}, \textit{front butterfly}, \textit{standing ABS}, \textit{boxing}, and \textit{jump\&jacks}. Sample output of the spike camera for each activity is shown in Figure ~\ref{activities_overview}. The selected activities encompass a broad spectrum of daily actions, deliberately chosen to include similar, closely related tasks and a mix of fast and slow-paced activities. This thoughtful selection allows the model to effectively learn and differentiate between actions, even those that are similar. 

\subsection{Hardware}

As the data collection was conducted indoors, it was necessary to use a 2000-watt lamp to enhance the lighting conditions, thereby improving the image quality captured by the spike camera, which typically performs optimally under natural light. To ensure the safety of the participants, they were provided with sunglasses to protect their eyes from the direct and intense light. The data collection setup is shown in Figure ~\ref{fig:data_collection_setup}. The three cameras were employed with the following specifications:

\begin{figure*}[t]
    \centering
    \includegraphics[width=0.95\textwidth]{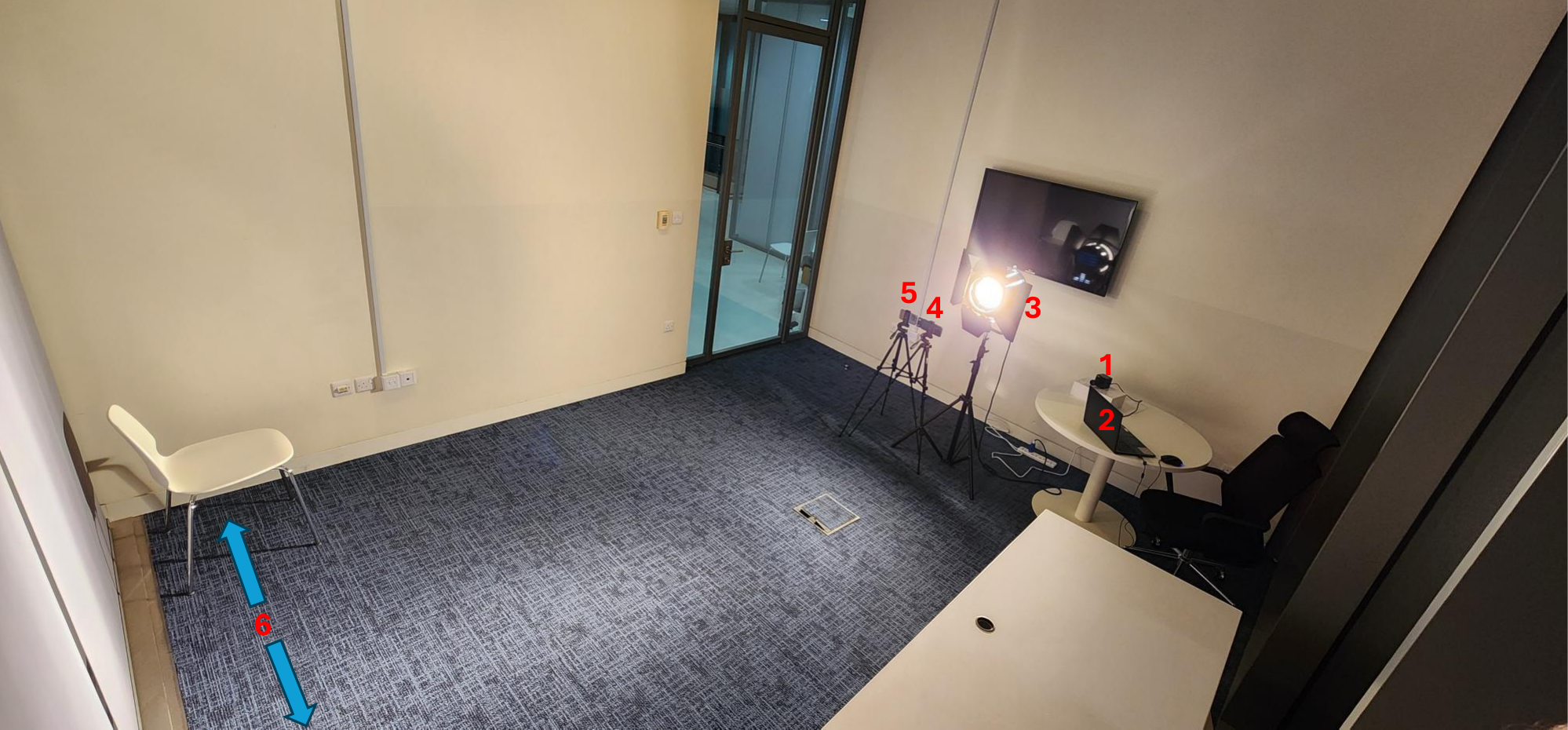}
    \captionof{figure}{The experimental setup consists of the following components: (1) a spike camera for event-based visual sensing, (2) a laptop for real-time capture and recording of the spike camera output, (3) an artificial lighting source to ensure consistent illumination, (4) a thermal camera for infrared data collection, (5) RGB camera for standard color video recording, and (6) the designated area where subjects perform the activities under observation. The three cameras were fixed at a height of approximately 1 meter and placed 3.5 meters away from the subject performing the activities.}
    \label{fig:data_collection_setup}

\end{figure*}

\par{\textbf{Thermal Camera:}} We utilized the FLIR One® Pro LT thermal camera, paired with a Samsung Galaxy S22 Ultra, to capture high-resolution thermal data. This compact, smartphone-compatible device ensured accurate temperature readings crucial for our study.


\par{\textbf{RGB Camera:}} We used the iPhone 14 Pro's rear camera to capture HD video at $1920\times1080$ resolution and $60$ FPS, ensuring clear and fluid footage. Figure \ref{sample output} presents a sample output from each camera.

\begin{figure*}[ht]
    \centering
        \begin{subfigure}[b]{0.32\textwidth}
        \centering
        \includegraphics[width=\textwidth]{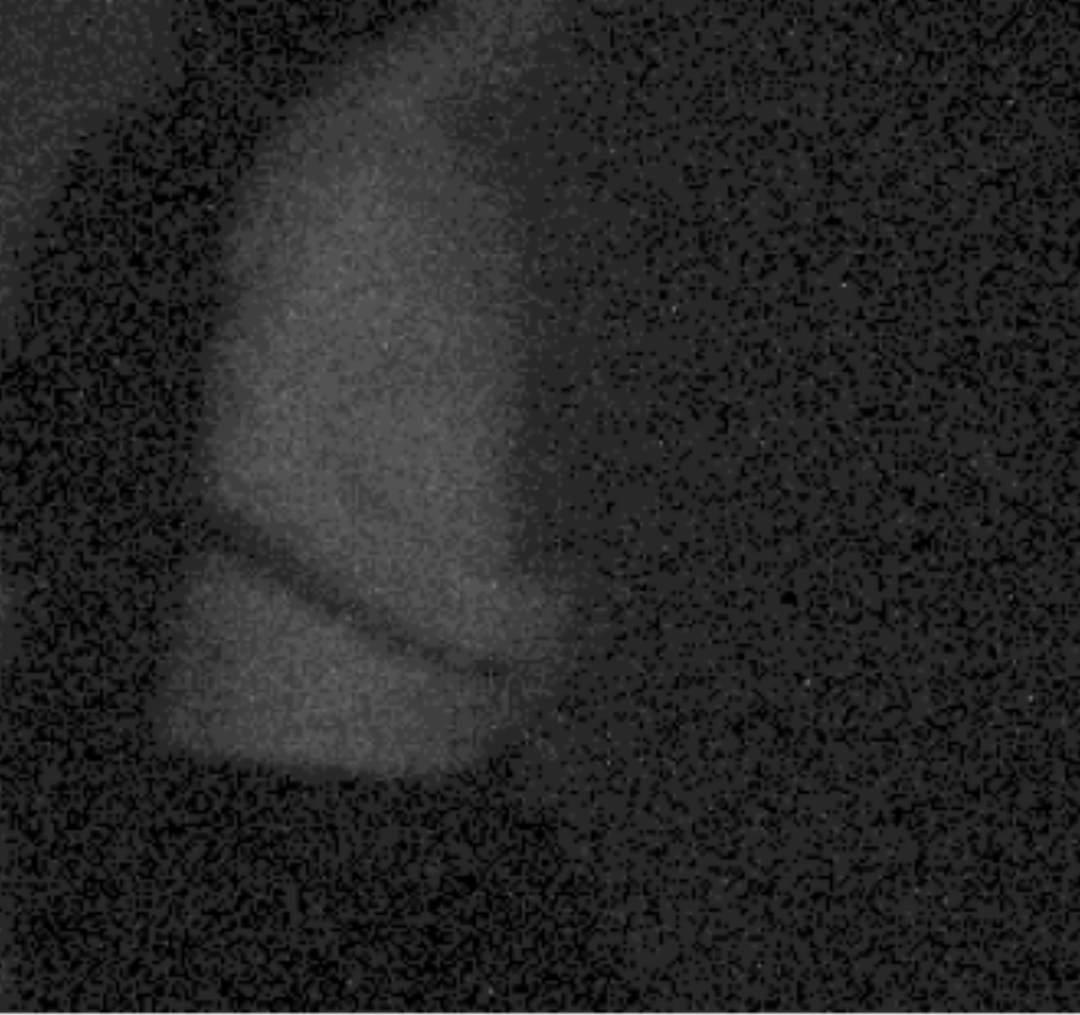}
        \caption{spike camera Output}
        \label{fig:image2}
    \end{subfigure}
    \hfill
    \begin{subfigure}[b]{0.33\textwidth}
        \centering
        \includegraphics[width=\textwidth]{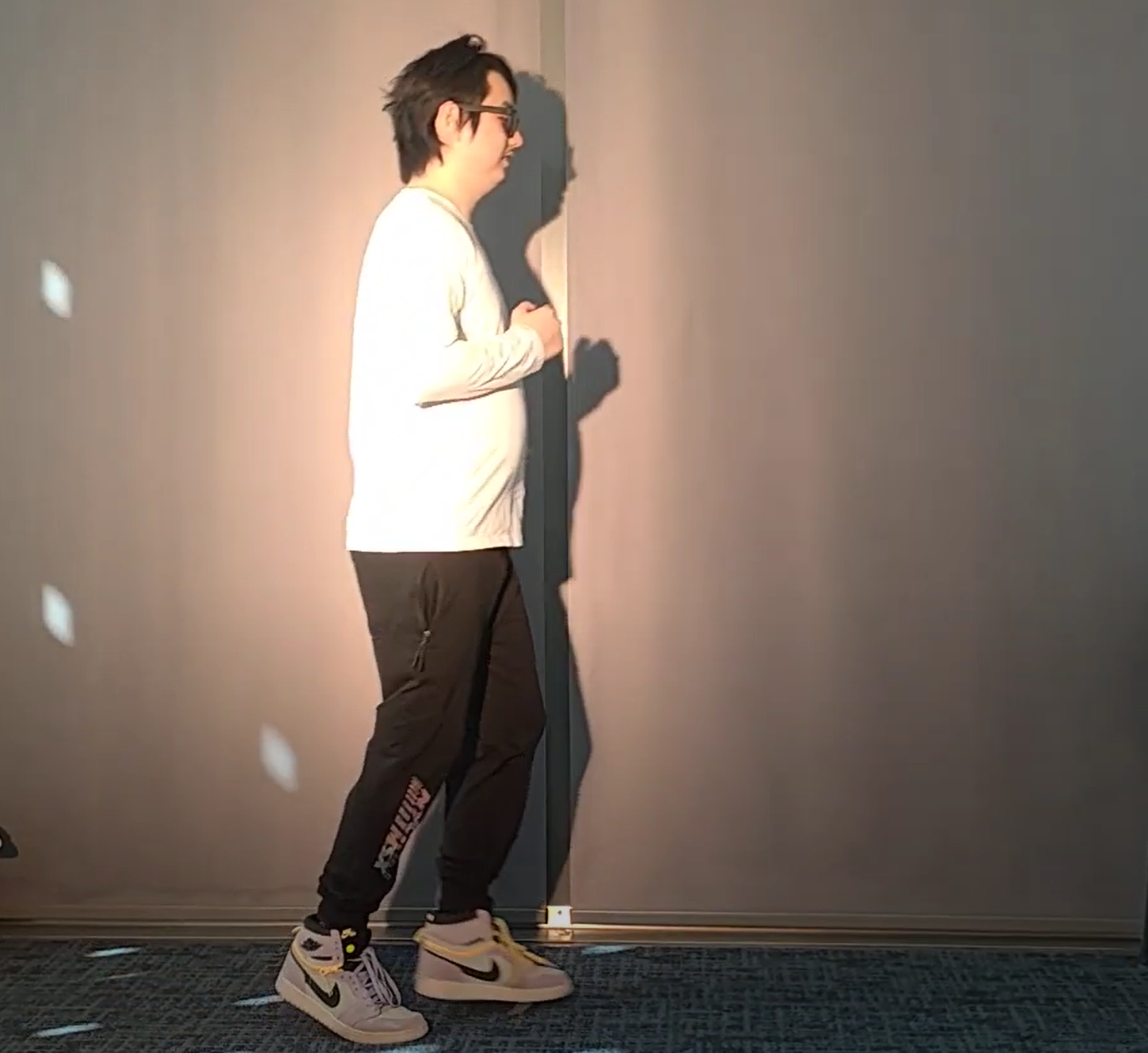}
        \caption{RGB Camera Output}
        \label{fig:image1}
    \end{subfigure}
    \hfill
    \begin{subfigure}[b]{0.31\textwidth}
        \centering
        \includegraphics[width=\textwidth]{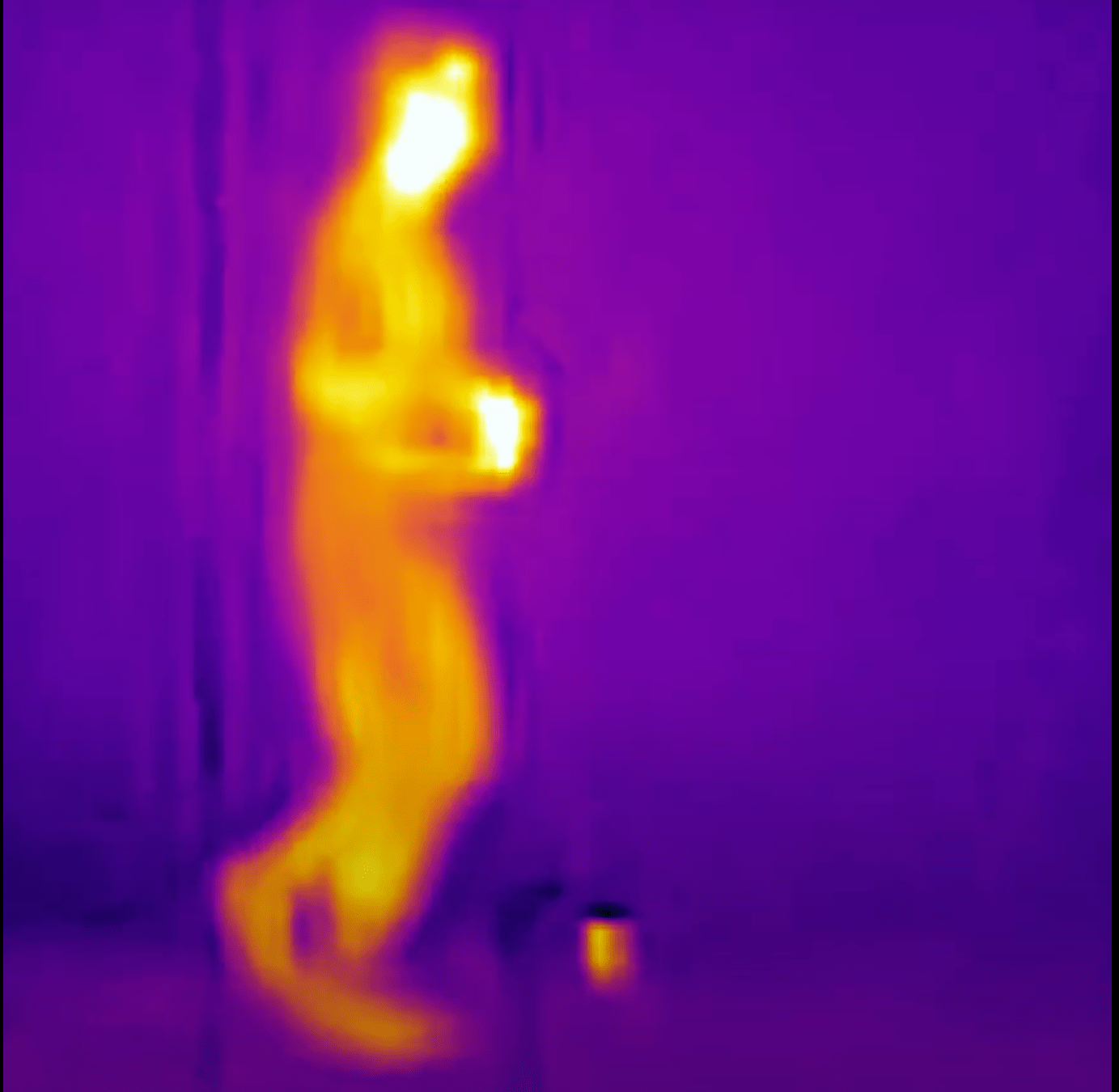}
        \caption{Thermal Camera Output}
        \label{fig:image3}
    \end{subfigure}
    \caption{Sample output frame from each camera for the same participant and action.}
    \label{sample output}
\end{figure*}

\par{\textbf{Spike camera:}} is a novel imaging sensor designed to capture continuous, asynchronous signals by simulating the behavior of integrate-and-fire neurons. Spike cameras fire spikes for every pixel based on the intensity of incoming light. The core mechanism involves three key components per pixel: a photoreceptor, an integrator, and a comparator. The photoreceptor converts the scene's light into an electrical signal, which the integrator accumulates over time. Once the accumulated charge surpasses a predefined threshold \(\theta\), the comparator triggers a spike and resets the integrator. This "integrate-and-fire" process occurs asynchronously for each pixel, producing a binary signal indicating whether a spike has been fired at any given time. The spike camera polls these signals at an extremely high frequency, generating spike frames that are arranged into a three-dimensional spike stream \(S(x, y, k)\), where \(x\) and \(y\) represent pixel coordinates, and \(k\) represents the discrete time index. Mathematically, the accumulation of the electric charge in the integrator can be represented by the following integral:

\begin{align}
    V_{i,j}(t) &= \int_{t_{i,j}^{last}}^{t} \alpha \cdot I_{i,j}(\gamma) \, d\gamma \quad \notag\\
    S(i, j, t) &=
    \begin{cases} 
         1 & \text{if } V_{i,j}(t) \geq \theta, \\
        0 & \text{otherwise}
    \end{cases}
\end{align}

where \(\alpha\) is the photoelectric conversion rate, \(I_{i,j}(\gamma)\) is the light intensity at position \((i,j)\), \(\theta\) is the firing threshold, and $t_{i,j}^{last}$ is last time when a spike is fired at position  \((i,j)\). A spike is fired at time \(t_k\) when the accumulated charge surpasses the threshold. At each polling interval \(t = k\tau\), the system checks whether a spike has been triggered. Typically, for our camera we set $\tau=50 \mu s$. If the accumulated charge exceeds the firing threshold \(\theta\), the spike is registered with \(S(x, y, k) = 1\) and $V_{i,j}(t)$ is reset to zero; otherwise, \(S(x, y, k) = 0\). These spike streams form a spatiotemporal binary matrix $\mathbf{M} \in \{0,1\}^{H \times W \times T}$ that captures the dynamic scene at high temporal resolution, enabling reconstruction of dense images from the sparse spike data. 





\subsection{Data Post-Processing}
\label{subsec:data}

We implemented a rigorous data post-processing pipeline to guarantee the highest quality of the final dataset. Initially, we conducted a thorough manual inspection to identify and rectify any anomalies or errors that may have arisen during data collection, such as stops, doing the activities in the wrong order, or camera hardware issues. Subsequently, each session was segmented into 18 sub-videos, corresponding to individual activities. To further augment the dataset and introduce more significant variability, each sub-video was evenly divided into two additional sub-videos, culminating in a total of $3,168$ videos per data type. \footnote{Our data is available at \textbf{\url{https://github.com/Yasser-Ashraf-Saleh/SPACT18-Dataset-Benchmark}}}

\subsubsection{Spiking Data Compression}


    
        

\begin{figure}[ht]
    \centering
    \begin{minipage}{0.47\textwidth}
        spike camera samples consist of a 100k-length binary spike trains, which poses significant challenges for SNN training and increases latency during inference. An established approach to overcome this and reduce the initial latency of the dataset is through temporal sampling of the spike camera data. However, using conventional sampling algorithms directly can result in aliasing issues and the loss of critical information due to the sparsity of spikes and the irregularity of spike occurrences.
    \end{minipage}
    \hfill
    \begin{minipage}{0.5\textwidth}
        \begin{algorithm}[H]
            \caption{Spike Compression Algorithm}
            \label{algo}
            \begin{algorithmic}[1]
                \STATE \textbf{Input:} $s$, $d$
                \STATE \textbf{Initialize:} $u \gets 0$, $T' \gets \left\lfloor \frac{T}{d} \right\rfloor$
                \FOR{$i = 0$ to $T'-1$}
                    \STATE $r \gets \text{mean}(s[i \cdot d +1: (i+1) \cdot d])$
                    \STATE $v \gets v + r$
                    \STATE $s'[i] \gets H(v - 1)$
                    \STATE $v \gets v - s'[i]$
                \ENDFOR
                \RETURN $S_u$
            \end{algorithmic}
        \end{algorithm}
    \end{minipage}
\end{figure}

    
        
        
        
    

In order to address long latency of the original spiking data (100k time steps), we propose a compression algorithm, designed to reduce the latency and at the same time to keep the original representation of the data. The main idea behind is to record the spiking rate at regular (of the same length), distinct (without overlapping) and exhaustive (the union covers the whole, or almost whole of 100k time steps) intervals, and produce the data with the same spiking rates, but recorder over the lower latency. 

To say more, we put ourselves in a general situation and consider a spike train $s=[s[1],\dots,s[T]]$ of length $T$ time steps. Let $d \ll T$ be the length of an interval and let $T'=\lfloor \frac{T}{d}\rfloor$. Then, our construction proceeds by considering $T'$ intervals (sets) of the form $I[i]:=\{id+1,\dots,(i+1)d\}$, for $i=0,\dots,T'-1$. Note that the union of the intervals covers $\{1,\dots,T\}$, except for the last few elements of the form $dT'+1,\dots,T$ (in case $T$ is not divisible by $d$). 

For each of the intervals $I[i]$, we define $r[i]$ to be the spiking rate of $s$ during the interval, i.e. $r[i]=\frac{1}{d}\sum_{t\in I[i]}s[t]$. Then, our compressed spike train $s'$ is obtained by using the Integrate-and-Fire (IF) spiking neuron which receives $r[i]$ as an input at the time step $i$, for $i=0,\dots, T'-1$
\begin{align}
    v[i] &= v[i-1] + r[i] \quad \notag \\
    s'[i] &= H(v[i] - 1) \quad \label{eq:spiking_eq} \\
    v[i] &= v[i] - s'[i]. \quad \notag 
\end{align}

\begin{figure*}[t]
    \centering
    \includegraphics[width=0.95\textwidth]{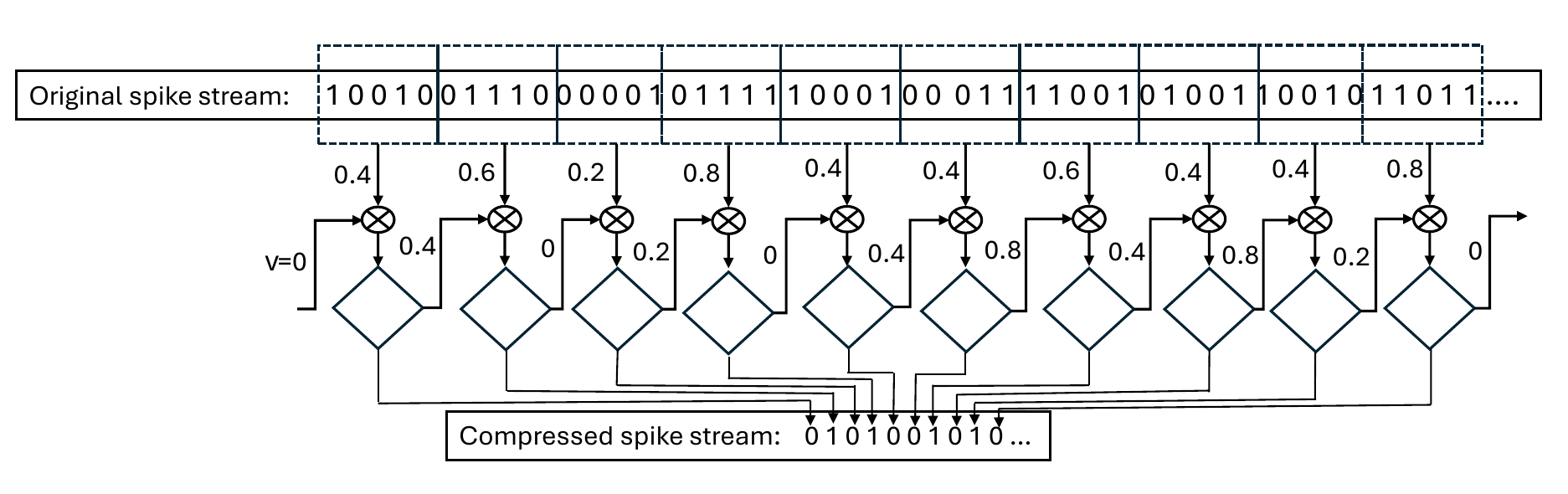}
    \captionof{figure}{Illustration of the proposed spike stream compression method. The original spike stream is provided as input, and the compressed spike stream is produced as output with $T' = 5$. The membrane potential $u$ is initialized to zero and evolves as indicated in the diagram.}
    \label{fig:ii}

\end{figure*}

We refer to Algorithm \ref{algo} for the algorithmic representation of the compression procedure, to Figure \ref{fig:ii} for the graphical representation of the procedure and to Figure \ref{compression_comp} for the comparisons of the reconstructed original and various compressed data. 

Compression algorithm is motivated by the following result which demonstrates its soundness by explicitly showing the effect of compression on the special case of constant rate spike trains. We provide the proof and more details on terminology and setting in the supplementary material.
\begin{restatable}{lemma}{lem}
    We keep the notation as above and suppose that $s$ is a spike train obtained when an IF neuron receives constant nonnegative input $c$ at each time step, and process it according to equations \eqref{eq:spiking_eq}. Then, for any $d$ as above, the spike train $s'$ obtained through compression and spike train $s$ will have the same limit spiking rate (which is $c$). 
\end{restatable}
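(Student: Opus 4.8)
The plan is to reduce the claim to an elementary fact about running averages by exploiting the conservation (telescoping) identity satisfied by the reset‑by‑subtraction IF neuron of \eqref{eq:spiking_eq}: over any horizon the number of emitted spikes equals the total accumulated input minus the net change of the membrane potential, and that potential stays bounded. I would apply this identity twice --- once to the given train $s$ and once to its compression $s'$ --- and then use the fact that the windows $I[0],\dots,I[T'-1]$ partition an initial segment of the time axis, so the running average of the inputs $r[i]$ driving the compressed neuron coincides, after reindexing, with the running average of $s$.

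Concretely, I would first record two boundedness facts. Since $c$ is a spiking rate we may take $0\le c\le 1$; an easy induction then shows that the membrane potential of the neuron generating $s$ stays in $[0,1)$ at every step (it grows by $c\le 1$ and is decremented by $1$ precisely when it reaches the threshold). Similarly each compression input $r[i]=\tfrac1d\sum_{t\in I[i]}s[t]$ is an average of $0/1$ values, hence lies in $[0,1]$, so the membrane potential $u[\cdot]$ of the compressed neuron in \eqref{eq:spiking_eq} also stays in $[0,1)$. Summing the update $v[t]=v[t-1]+c-s[t]$ over $t=1,\dots,T$ gives $\sum_{t=1}^{T}s[t]=cT+v[0]-v[T]$; dividing by $T$ and letting $T\to\infty$ shows that the limit spiking rate of $s$ exists and equals $c$, because $v[T]$ is bounded. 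This settles the ``(which is $c$)'' part and prepares the second application.

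Applying the same telescoping to $s'$, whose step‑$i$ input is $r[i]$ and whose membrane is initialized at $0$, yields $\sum_{i=0}^{T'-1}s'[i]=\sum_{i=0}^{T'-1}r[i]-u[T'-1]$. Since $I[0],\dots,I[T'-1]$ partition $\{1,\dots,dT'\}$, we have $\sum_{i=0}^{T'-1}r[i]=\tfrac1d\sum_{t=1}^{dT'}s[t]$, so dividing by $T'$ and using $u[T'-1]\in[0,1)$,
\[
\frac{1}{T'}\sum_{i=0}^{T'-1}s'[i]\;=\;\frac{1}{dT'}\sum_{t=1}^{dT'}s[t]\;+\;O\!\left(\tfrac{1}{T'}\right).
\]
As $T'\to\infty$ the right-hand side tends to $c$, since $\tfrac{1}{dT'}\sum_{t=1}^{dT'}s[t]$ is the subsequence along $T=dT'$ of the partial averages of $s$ already shown to converge to $c$. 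Hence $s'$ has limit spiking rate $c$, the same as $s$.

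I do not expect a substantive obstacle. The points requiring care are (i) making ``limit spiking rate'' well-defined, which the telescoping identity does directly by exhibiting the error term $v[T]/T$ (respectively $u[T'-1]/T'$) explicitly; (ii) being explicit that $c\le 1$, so that these error terms are genuinely $o(1)$; and (iii) noting that discarding the last $T-dT'<d$ time steps of $s$ is harmless in the limit. In the final write-up I would state the conservation identity as a short standalone sublemma, since it is used for both $s$ and $s'$.
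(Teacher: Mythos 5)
Your proposal is correct and rests on the same underlying mechanism as the paper's proof: the cumulative spike count of a reset-by-subtraction IF neuron equals its cumulative input up to a bounded remainder, applied once to $s$ and once to the compression neuron, with the window sums $\sum_i r[i]=\tfrac{1}{d}\sum_{t\le dT'}s[t]$ linking the two running averages. The only difference is packaging --- you phrase this as a telescoping identity with an $O(1/T')$ error term where the paper uses the exact count $\lfloor tc\rfloor$ --- and your explicit remark that $0\le c\le 1$ is needed for the bounded-potential step (and for the conclusion that the limit rate equals $c$) is a hypothesis the paper leaves implicit.
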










\begin{minipage}{0.45\textwidth}

Using the above compression Algorithm ~\ref{algo}, we have compressed the raw spiking data into two versions. We refer to them as compressed 10k and compressed 1k, with \( d = 10 \) (for 10k) and \( d = 100 \) (for 1k), respectively. These compressed versions were used for both training and testing.
\end{minipage}%
\hfill
\begin{minipage}{.5\textwidth}
  \centering
  \captionsetup[table]{hypcap=false}

  \captionof{table}{Comparison for storage of original and compressed spiking versions using LZMA.}
\label{tab:compressed}
\resizebox{\textwidth}{!}{%
\begin{tabular}{ccc}

\hline
\textbf{File}           & \textbf{Original Size} & \textbf{Compressed Size (LZMA)} \\
\hline
Original                & 3.8 TB              & 425 GB           \\
Compressed 10k          & 377.63 GB              &      42.43 GB  \\
Compressed 1k         & 37.76 GB             & 4.15 GB     \\

Compressed 10k rate encoded         & 240.76 GB             & 80.96 MB
\\
Compressed 1k rate encoded         & 240.76 GB             & 80.96 MB \\

\hline
\end{tabular}%
}
\end{minipage}%


We need to further compress this data for efficient storage and transmission, as spiking videos are typically very large. However, the spike stream is a binary, sparse matrix, which makes it well-suited for compression. Table ~\ref{tab:compressed} shows the compression ratios achieved using the classic LZMA (Lempel-Ziv-Markov chain algorithm) \citep{lazma}, a lossless encoder, on the spike stream data.

\begin{figure*}[t]
    \centering
        \begin{subfigure}[b]{0.32\textwidth}
        \centering
        \includegraphics[width=\textwidth]{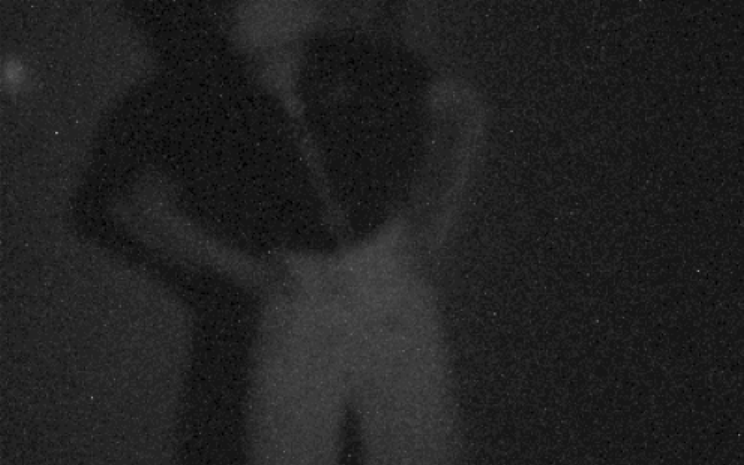}
        \caption{Original}
        \label{fig:original}
    \end{subfigure}
    \hfill
    \begin{subfigure}[b]{0.32\textwidth}
        \centering
        \includegraphics[width=\textwidth]{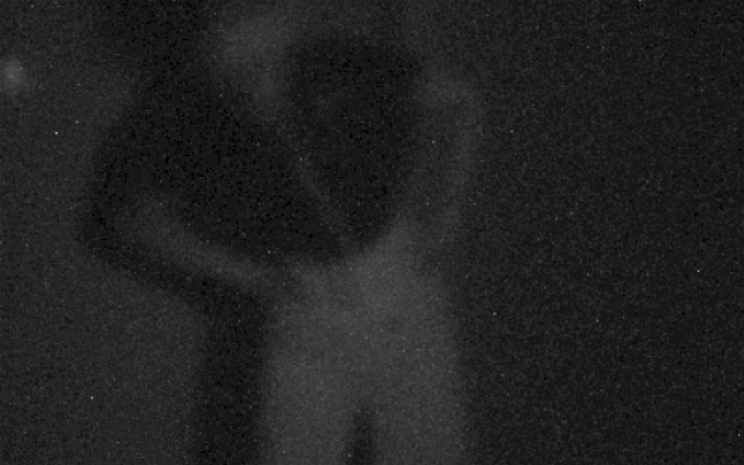}
        \caption{Compressed 10k}
        \label{fig:10k}
    \end{subfigure}
    \hfill
    \begin{subfigure}[b]{0.32\textwidth}
        \centering
        \includegraphics[width=\textwidth]{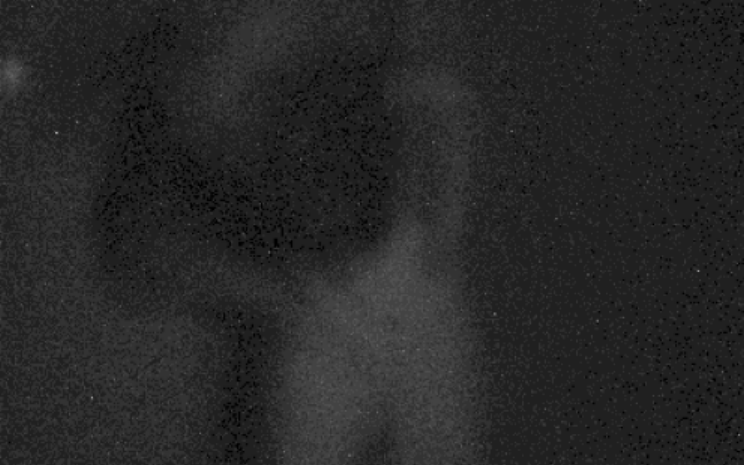}
        \caption{Compressed 1k}
        \label{fig:1k}
    \end{subfigure}
    \caption{Reconstructed, with TFP, sample output frame from the spike camera for the three versions of our spiking datset original, compressed 10k, and compressed 1k.}
    \label{compression_comp}
\end{figure*}






\section{Expermintal Setup}

To evaluate SPACT18, we adopted a systematic approach utilizing SOTA lightweight models. Given spiking data's energy efficiency and computational advantages, the results were benchmarked against highly efficient models to ensure fairness and relevance in future comparisons. We selected two prominent lightweight architectures, X3D and UniFormer, known for their balance between computational efficiency and competitive accuracy on benchmark datasets.


X3D is a highly optimized CNN for video understanding \cite{x3d}, balancing accuracy and efficiency by expanding input’s temporal and spatial dimensions, reducing computational demands without sacrificing performance. This makes X3D particularly suitable for low-resource environments without compromising on accuracy. In contrast, UniFormer adopts a hybrid architecture that fuses convolutional operations with self-attention mechanisms \cite{uniform1}. This allows the model to capture local and global dependencies in video data, making it versatile across different modalities. UniFormer’s ability to effectively model spatiotemporal relationships with moderate computational cost ensures its competitive standing, even on challenging datasets.

We employed both small and medium variants for each model, which provide different trade-offs between model complexity and performance. The models were tested across all data modalities: thermal and RGB data (in both RGB and grayscale channels). Even though compressed 10k and 1k datasets significantly reduce the latency of the spiking dataset, training ANN models on them turned out to be infeasible. In order to provide baselines for them, we performed "rate" encoding of the datasets, which consists in the following. Instead of using the original binary 10k (resp. 1k) time steps for training the ANN, we reshaped the temporal dimension of spike trains into $100\times100$ (resp. $10\times100$) for compressed 10k (resp. 1k) dataset. We then computed the average along the first dimension to convert the binary spikes into rate-encoded values, effectively reducing the temporal complexity while preserving essential spike information for ANN training.



ANN-SNN conversion transforms a pre-trained ANN into a spike-based SNN \citep{ANNSNN1, ANNSNN2}. For video models, this process is challenging due to the widespread use of non-ReLU activations and the depth of these models exacerbates the propagation of errors across layers. We selected MC3 to provide an SNN baseline for SPACT18. After training MC3, we recorded the maximum ReLU activations channel-wise using training dataset. This channel-wise approach ensures a more precise threshold for each neuron, improving the conversion accuracy. Then, each ReLU was replaced with a Leaky Integrate-and-Fire (LIF) neuron, using the recorded maximum activation as the neuron’s threshold, while the initial membrane potential is set to half of the threshold.  For inference, video inputs were encoded using a constant-encoding scheme.

The MC3 model is a variant of 3D CNNs for video action recognition, using a ResNet-18 backbone with 18 layers of 3D convolutions and two fully connected layers \citep{mc3}. It employs a hybrid approach with 2D spatial and 1D temporal convolutions to reduce complexity while capturing motion information. Notably, MC3 utilizes ReLU activations, making it well-suited for ANN-SNN conversion and balancing accuracy and efficiency.

Table \ref{tab:results_main} presents the complete experimental setup, including all models and data modalities. The input size is defined as (Frame rate × Image dimension), and all experiments were conducted using a single RTX A6000 GPU.  The dataset was then divided into 80\% for training, 10\% for validation, and 10\% for testing. This split was performed subject-wise, ensuring that each subject and all corresponding activities remained within a single set, thus enabling a rigorous evaluation of the model's generalization capacity. Train, validation and test subjects are split at random and kept the same for all experiments for fair comparisons and consistency.

\section{Results and Discussion}
The results, as shown in Table \ref{tab:results_main}, show that thermal data consistently outperforms other modalities, allowing the model to focus solely on the activity and reducing distractions from irrelevant details. RGB color outperforms grayscale by adding richer information, although this advantage is more pronounced in RGB than in thermal data, where the additional channels have less impact.

For spiking data, the 10k compression level performed on par with thermal and RGB. However, performance dropped sharply at 1K compression, suggesting that spiking data retains critical information at higher compression levels but suffers significant loss at extreme compression. This highlights the challenge of processing spiking data compared to other modalities and underscores the need for selecting appropriate compression ratios.
In terms of model efficiency, X3D is computationally lighter with lower FLOPs. Still, UniFormer consistently achieves higher accuracy, especially in color settings, demonstrating its ability to capture more complex features, albeit at a higher computational cost.




\begin{table}[t]
\centering
\caption{Results for different experiments (RGB, Thermal, Spiking) for ANN models. }
\label{tab:results_main}

\begin{tabular}{ c c c c c c}
\hline

\textbf{Model} & \textbf{Data} & \textbf{Input Size} & \textbf{FLOPs (G)} & \textbf{Accuracy\%} & \textbf{F1 Score} \\ 
\hline

\multirow{6}{*}{\makecell{\textbf{X3D\_M} \\ \textit{\# Param = 3M}}} 
 & RGB\textsubscript{Grey} & 50x224$^2$ & 13.72×3×10 & 78.7 & \textit{0.76} \\ 
 & RGB\textsubscript{Color} & 50x224$^2$ & 41.19×3×10 & 80.9 & 0.81 \\ 
 & Thermal\textsubscript{Grey} & 30x224$^2$ & 10.15×3×10 & 81.2 & 0.80 \\ 
 & Thermal\textsubscript{Color} & 30x224$^2$ & 30.45×3×10 & \textbf{83.4} & \textbf{0.84} \\ 
 & Spiking 10k rate & 100x200$^2$ & 21.91×3×10 & 79.9 & 0.79 \\ 
 & Spiking 1k rate & 100x200$^2$ & 21.91×3×10 & \textit{\underline{69.4}} & \textit{\underline{0.69}} \\ \hline
 
\multirow{6}{*}{\makecell{\textbf{UniFormer\_B} \\ \textit{\# Param = 50M}}} 
 & RGB\textsubscript{Grey} & 50x224$^2$ & 101.04×1×4 & \textit{83.8} & \textit{0.82} \\ 
 & RGB\textsubscript{Color} & 50x224$^2$ & 303.13×1×4  & 85.2 & 0.85 \\ 
 & Thermal\textsubscript{Grey} & 30x224$^2$ & 60.63×1×4 & 84.9 & 0.85 \\ 
 & Thermal\textsubscript{Color} & 30x224$^2$ & 181.88×1×4 & \textbf{\underline{85.7}} & \textbf{\underline{0.87}} \\ 
 & Spiking 10k rate& 100x200$^2$ & 77.92×1×4  & 84.7 & 0.84 \\ 
 & Spiking 1k rate& 100x200$^2$ & 77.92×1×4  & \textit{74.4} & \textit{0.74} \\ \hline

\end{tabular}
\end{table}


Directly training SNNs for video classification remains a significant challenge, with performance lagging behind ANNs by over 30\%, as reported in \citep{respike} and shown in Table \ref{tab:model_snn}. To address these limitations, hybrid models have been proposed \citep{respike} to balance the trade-off between accuracy and energy efficiency through cross-attention fusion between ANN and SNN models. The results in Table \ref{tab:model_snn} demonstrate that hybrid models effectively achieve this balance by leveraging spiking data in SNNs alongside RGB or thermal data in ANNs, significantly improving performance through cross-modal fusion. However, our findings also highlight the need for further advancements in SNN training, as current hybrid models remain incompatible with energy-efficient neuromorphic hardware, limiting their practical deployment.

Table ~\ref{tab:annsnn} shows that while Spiking 10k achieves higher accuracy on the ANN, Spiking 1k results in competitive accuracy at lower latencies during SNN inference (particularly between $T =[ 128,512]$). This can be attributed to temporal compression during ANN training, where both datasets are rate encoded into 100 frames, thus spiking 10k dataset has a finer temporal resolution of 0.01, while spiking 1k has a coarser resolution of 0.1. These results highlight a trade-off between temporal resolution and efficiency, with Spiking 1k excelling in low-latency, efficiency-critical applications. More experimental and qualitative results are in Appendix \ref{sect:exp} and \ref{qual_sup}, respectively. 

\begin{table}[t!]
\centering
\caption{Results for different experiments Spiking data using SNN direct training and hybrid models.}
\resizebox{\textwidth}{!}{%
\begin{tabular}{c c c c c c}

\hline
\textbf{Category} &
\textbf{Model}               & \textbf{Data}  &\textbf{Accuracy (\%)} &\textbf{F1 Score}\\
\hline
\multirow{3}{*}{SNN Direct Train}
&STS-ResNet \citep{sts}   &Spiking 10k    & 15.62 &0.16          \\
&MS-ResNet  \citep{ms}   &Spiking 10k           & 50.35  &0.50         \\
&TET-ResNet    \citep{tet}  &Spiking 10k      & 58.16          &0.59  \\
\hline
\multirow{3}{*}{Hybrid}& &Spiking 10k      & 71.18&0.72
           \\
   &Respike \citep{respike}&RGB  + Spiking 10k   & 82.67 &0.82
           \\
 &&Thermal + Spiking 10k   & 87.54 &0.87\\

\hline
\end{tabular} %
}
\label{tab:model_snn}
\end{table}



\section{Challenges, Limitation and Future Work}

\subsection{Video Action Recognition Classification}
We believe that SPACT18 can be exploited by the SNN's research community as benchmark to accelerate video understanding tasks, for efficient deployment on realworld applications.
\par{\textbf{Direct SNN Training: }} Our dataset offers rich temporal information, with raw spiking data containing 100k time steps per sample. Direct training on such large-scale data is computationally intensive, particularly for SNNs, as their training computational complexity scales with the number of time steps. This makes SPACT18 (and its compressed versions) a challenge for developing specialized algorithms tailored for direct SNN training and evaluation. 
\begin{table}[ht]
\centering
\caption{MC3 results for ANN-SNN conversion}

\resizebox{\textwidth}{!}{%
\begin{tabular}{cccccccccc}
\hline
\textbf{}         &\textbf{ANN}& \textbf{T=16} & \textbf{T=32} & \textbf{T=64} & \textbf{T=128} & \textbf{T=256} & \textbf{T=512} & \textbf{T=1024} & \textbf{T=2048}\\
\hline
Spiking 10k rate        & 75.52&8.68&12.33&15.10 & 18.92 &21.53&35.07& 58.85&71.35\\
Spiking 1k rate  &69.64&11.11&15.67&29.17 & 48.81 &59.92&64.88&68.45& 69.05  \\

\hline
\end{tabular}%
}
\label{tab:annsnn}
\end{table}

\par{\textbf{ANN-SNN Conversion: }} 
Although ANN-SNN conversion methods have made significant advancements for image classification models \citep{img1,img2,img3,img4}, applying these techniques to video models remains challenging. This is mainly due to the custom layers in video models whose conversion to SNN is still not well understood, and to the increased depth of these models \cite{x3d}\cite{uniform1}, which consequently leads to high conversion approximation error in SNNs. Although, MC3 achieves a high accuracy, this comes at a cost of high latency (hence high energy consumption), as shown in the Table ~\ref{tab:annsnn}. This highlights the need for more efficient ANN-SNN conversion methods specific to video models.




\par{\textbf{Multimodal Action classification: }} Multimodal research in SNNs is still in its early stages, facing significant challenges compared to ANNs, particularly in integrating data like RGB and thermal camera inputs with event-driven spikes from sensors such as spike cameras \citep{dai2024spikenvsenhancingnovelview,multi2}. While most SNN research has focused on single-modality event-driven data, the synchronization and fusion of diverse modalities, along with the development of learning algorithms to handle such multimodal inputs, remain key obstacles \citep{Multi4}. The SNN community can benefit from multimodal datasets that include RGB, thermal, and spike camera data, enabling the creation of energy-efficient models for real-time, low-power applications like surveillance, robotics, and autonomous systems \citep{multi1,multi3}. These datasets allow researchers to explore fusion, enhance action recognition, improve cross-modal learning, and benchmark neuromorphic hardware, ultimately advancing the use of SNNs in dynamic environments.


\subsection{\text{Low Level Vision Applications}}
In addition to the main task of video action classification, our dataset can be used in other tasks as:
\par{\textbf{Reconstruction: }} Recently, significant research has focused on spike camera reconstruction for high-speed moving objects, with most datasets presented specifically to the reconstruction methods being developed \citep{MESIA,SPK2IN,10657167,Zhang2023LearningTR,ijcai2022p396}. Additionally, spike cameras typically require sunlight for optimal performance; however, our dataset was collected indoors under artificial lighting, presenting a new challenge. This unique setting also provides an opportunity to benchmark reconstruction algorithms under less ideal lighting conditions.

\par{\textbf{Compression: }} We have proposed a new compression algorithm to reduce the temporal resolution of raw spiking data, enabling more efficient training on smaller datasets. However, more efficient algorithms could be devloped to extract features from the original raw data and compress it into a smaller spiking representation with minimal loss of the rich temporal information inherent in the original dataset. Furthermore, our dataset can also be used to evaluate compression algorithms for efficient storage and transmission of spike camera data \citep{spikecodec}.

\section{Conclusion}


This paper introduces SPACT18, the first spiking VAR benchmark dataset using spike cameras synchronized with RGB and thermal modalities, advancing spiking video understanding. Key findings show spiking data achieves competitive performance with compressed 10k (rate), while compressed 1k with degraded accuracy highlights the trade-offs of extreme compression. Thermal data outperformed other modalities, achieving 85.7\% accuracy with UniFormer, demonstrating the potential of multimodal fusion. In contrast, direct spiking training and ANN-SNN conversion remain challenging due to low accuracy, high latency and computational complexity. Although, Hybrid approaches like Respike excel in cross-modal integration underscoring the trade-off between accuracy and energy efficiency. SPACT18 lays a foundation for energy-efficient models, with future work focusing on optimized SNN training, improved ANN-SNN conversion, and multimodal integration for practical applications.
\section{Ethics Statement}

All necessary approvals and informed consents were obtained from the participants, ensuring that the dataset will be publicly available for research purposes. This study adheres to ethical guidelines, and no actions were taken that could compromise the privacy, safety, or well-being of the volunteer subjects. Additionally, the dataset complies with all applicable ethical standards and privacy regulations to protect the participants' identities and data.

\bibliography{iclr2025_conference}
\bibliographystyle{iclr2025_conference}

\appendix
\section{Appendix}

\subsection{Detailed Experimental Results}
\label{sect:exp}
Table \ref{tab:results2} presents a comprehensive comparison of various ANN models evaluated across different input modalities, including RGB, Thermal, and Spiking data. It details the computational cost in terms of FLOPs, alongside the accuracy and F1 scores achieved for each configuration. The models analyzed include X3D~\cite{x3d}, C2D~\cite{tran2018closer}, I3D~\cite{carreira2017quo}, SlowFast~\cite{feichtenhofer2019slowfast}, and UniFormer~\cite{uniform1}, each offering a distinct trade-off between model complexity and performance.

The results show a consistent improvement in accuracy when using color inputs compared to greyscale inputs across different models. Thermal data also tends to yield superior performance in comparison to RGB for some models, indicating its potential value for specific tasks. The table highlights the differences between lightweight and more complex architectures, with models like X3D and UniFormer achieving competitive accuracy and F1 scores with significantly fewer parameters and lower FLOPs, making them more suitable for resource-constrained environments.

\begin{table}[ht]
\centering
\caption{Results for different experiments (RGB, Thermal, Spiking) for ANN models.}
\label{tab:results2}
\begin{tabular}{c c c c c c}
\hline
\textbf{Model} & \textbf{Data} & \textbf{Input Size} & \textbf{FLOPs (G)} & \textbf{Accuracy\%} & \textbf{F1 Score} \\ 
\hline

\multirow{6}{*}{\makecell{\textbf{X3D\_S} \\ \textit{\# Param = 3M}}}
 & RGB\textsubscript{Grey} & 50x224$^2$ & 7.46×3×10 & 75.4 & 0.76 \\ 
 & RGB\textsubscript{Color} & 50x224$^2$ & 22.31×3×10 & 76.8 & 0.77 \\ 
 & Thermal\textsubscript{Grey} & 30x224$^2$ & 4.47×3×10 & 77.9 & 0.79 \\ 
 & Thermal\textsubscript{Color} & 30x224$^2$ & 13.41×3×10 & \textbf{80.6} & \textbf{0.80} \\ 
 & Spiking 10k rate & 100x200$^2$ & 11.88×3×10 & 80.2 & 0.80 \\ 
 & Spiking 1k rate & 100x200$^2$ & 11.88×3×10 & \textit{70.4} & \textit{0.70} \\ 
 \hline

\multirow{6}{*}{\makecell{\textbf{X3D\_M} \\ \textit{\# Param = 3M}}} 
 & RGB\textsubscript{Grey} & 50x224$^2$ & 13.72×3×10 & 78.7 & \textit{0.76} \\ 
 & RGB\textsubscript{Color} & 50x224$^2$ & 41.19×3×10 & 80.9 & 0.81 \\ 
 & Thermal\textsubscript{Grey} & 30x224$^2$ & 10.15×3×10 & 81.2 & 0.80 \\ 
 & Thermal\textsubscript{Color} & 30x224$^2$ & 30.45×3×10 & \textbf{83.4} & \textbf{0.84} \\ 
 & Spiking 10k rate & 100x200$^2$ & 21.91×3×10 & 79.9 & 0.79 \\ 
 & Spiking 1k rate & 100x200$^2$ & 21.91×3×10 & \textit{\underline{69.4}} & \textit{\underline{0.69}} \\ 
 \hline

\multirow{6}{*}{\makecell{\textbf{C2D} \\ \textit{\# Param = 24M}}}
 & RGB\textsubscript{Grey} & 50x224$^2$ & 53.02×3×10 & 71.0 & 0.71 \\ 
 & RGB\textsubscript{3 Channels} & 50x224$^2$ & 159.07×3×10 & 72.6 & 0.72 \\ 
 & Thermal\textsubscript{Grey} & 30x224$^2$ & 31.81×3×10 & 73.2 & 0.73 \\ 
 & Thermal\textsubscript{3 Channels} & 30x224$^2$ & 95.43×3×10 & \textbf{74.8} & \textbf{0.74} \\ 
 & Spiking 10k rate & 100x200$^2$ & 84.84×3×10 & 72.8 & 0.72 \\ 
 & Spiking 1k rate & 100x200$^2$ & 84.84×3×10 & \textit{67.4} & \textit{0.67} \\ 
 \hline

\multirow{6}{*}{\makecell{\textbf{I3D} \\ \textit{\# Param = 28M}}} 
 & RGB\textsubscript{Grey} & 50x224$^2$ & 76.67×3×10 & 73.5 & 0.73 \\ 
 & RGB\textsubscript{3 Channels} & 50x224$^2$ & 230.02×3×10 & 75.0 & 0.75 \\ 
 & Thermal\textsubscript{Grey} & 30x224$^2$ & 46.00×3×10 & 75.7 & 0.76 \\ 
 & Thermal\textsubscript{3 Channels} & 30x224$^2$ & 138.00×3×10 & \textbf{77.6} & \textbf{0.77} \\ 
 & Spiking 10k rate & 100x200$^2$ & 122.97×3×10 & 76.1 & 0.76 \\ 
 & Spiking 1k rate & 100x200$^2$ & 122.97×3×10 & \textit{69.8} & \textit{0.69} \\ 
 \hline

\multirow{6}{*}{\makecell{\textbf{UniFormer\_S} \\ \textit{\# Param = 21M}}} 
 & RGB\textsubscript{Grey} & 50x224$^2$ & 22.15×1×4 & \textit{80.6} & 0.82 \\ 
 & RGB\textsubscript{Color} & 50x224$^2$ & 66.44×1×4 & 82.5 & 0.83 \\ 
 & Thermal\textsubscript{Grey} & 30x224$^2$ & 13.39×1×4 & 83.0 & 0.83 \\ 
 & Thermal\textsubscript{Color} & 30x224$^2$ & 40.16×1×4 & \textbf{84.2} & \textbf{0.85} \\ 
 & Spiking 10k rate & 100x200$^2$ & 35.52×1×4 & 82.5 & 0.82 \\ 
 & Spiking 1k rate & 100x200$^2$ & 35.52×1×4 & \textit{73.2} & \textit{0.72} \\ 
 \hline

\multirow{6}{*}{\makecell{\textbf{UniFormer\_B} \\ \textit{\# Param = 50M}}} 
 & RGB\textsubscript{Grey} & 50x224$^2$ & 101.04×1×4 & \textit{83.8} & \textit{0.82} \\ 
 & RGB\textsubscript{Color} & 50x224$^2$ & 303.13×1×4 & 85.2 & 0.85 \\ 
 & Thermal\textsubscript{Grey} & 30x224$^2$ & 60.63×1×4 & 84.9 & 0.85 \\ 
 & Thermal\textsubscript{Color} & 30x224$^2$ & 181.88×1×4 & \textbf{\underline{85.7}} & \textbf{\underline{0.87}} \\ 
 & Spiking 10k rate & 100x200$^2$ & 77.92×1×4 & 84.7 & 0.84 \\ 
 & Spiking 1k rate & 100x200$^2$ & 77.92×1×4 & \textit{74.4} & \textit{0.74} \\ 
 \hline

\multirow{6}{*}{\makecell{\textbf{SlowFast} \\ \textit{\# Param = 35M}}} 
 & RGB\textsubscript{Grey} & 50x224$^2$ & 97.46×3×10 & 76.7 & 0.76 \\ 
 & RGB\textsubscript{3 Channels} & 50x224$^2$ & 292.37×3×10 & 78.3 & 0.78 \\ 
 & Thermal\textsubscript{Grey} & 30x224$^2$ & 58.47×3×10 & 79.1 & 0.79 \\ 
 & Thermal\textsubscript{3 Channels} & 30x224$^2$ & 175.41×3×10 & \textbf{81.2} & \textbf{0.81} \\ 
 & Spiking 10k rate & 100x200$^2$ & 156.06×3×10 & 80.7 & 0.80 \\ 
 & Spiking 1k rate & 100x200$^2$ & 156.06×3×10 & \textit{71.5} & \textit{0.71} \\ 
 \hline

\end{tabular}
\end{table}

\subsection{Comparison with other datasets}
\label{subsec:data-analysis}

The literature extensively uses multi-modalities for video action recognition, including RGB, thermal, depth, and IMU data. Additionally, some datasets have employed event cameras. A key differentiator of our work is including spiking data, which adds significant value. 

Table \ref{table:event_datasets} highlights the key features of widely used action recognition datasets and emphasizes the unique contributions of SPACT18. Unlike other datasets, SPACT18 integrates spiking data with RGB and thermal modalities, providing a comprehensive multi-modal framework for video understanding. The inclusion of a thermal camera is particularly important as it captures motion effectively under low lighting and low frame rates, complementing RGB and spike data. Synchronizing these modalities enhances the dataset's diversity and robustness, allowing for broader applications and meaningful performance comparisons. Furthermore, SPACT18 provides raw spiking data from native spike camera recordings, offering researchers the opportunity to optimize SNNs specifically for video understanding tasks. With its high-resolution data, diverse set of 18 activity classes, and multi-modal design, SPACT18 represents a notable advancement in action recognition and spiking research, driving innovation in energy-efficient video processing technologies.

The table \ref{tab:dataset_comparison} compares several action recognition datasets, highlighting features like data sources, modalities, resolutions, and class/sample counts. The SPACT18 dataset offers diverse modalities (Spike, RGB, Thermal), unlike traditional datasets (e.g., HMDB-51, UCF-101, Kinetics-400) that use only RGB data. SPACT18's higher-resolution RGB and Thermal videos enable detailed feature extraction, and its balanced and high samples per class (approximately 176) support robust training. Overall, SPACT18 bridges the gap between conventional video recognition datasets and multi-modal action recognition, enhancing understanding of complex human activities.

\begin{table*}[ht]
\centering
\caption{Comparison of Event-Based Action Recognition Datasets}
\renewcommand{\arraystretch}{1.5} 
\resizebox{\textwidth}{!}{%
\begin{tabular}{lp{0.8cm}p{1.5cm}p{1.5cm}p{1cm}p{1cm}p{1cm}p{1cm}p{1cm}p{1cm}p{1.2cm}p{1.4cm}p{1cm}}
\hline
\textbf{Dataset Name} & \textbf{Year} & \textbf{Sensor(s)} & \textbf{Resolution} & \textbf{Object} & \textbf{Scale} & \textbf{Classes} & \textbf{Scale / \newline Classes Ratio} & \textbf{Subjects} & \textbf{Real-World} & \textbf{Duration} & \textbf{Modalities} & \textbf{Static} \\ \hline

CIFAR10-DVS \citep{dvscifar}& 2017 & DAVIS128 & 128$\times$128 & Image & 10,000 & 10 & $\sim$1,000 & - & \ding{55} & 1.2s & DVS & \ding{55} \\ 
DvsGesture \citep{dvs128_2021}& 2017 & DAVIS128 & 128$\times$128 & Action & 1,342 & 11 & $\sim$122 & 29 & \ding{51} & $\sim$6s & DVS & \ding{55} \\ 
ASL-DVS \citep{iacc2}& 2019 & DAVIS240 & 240$\times$180 & Hand & 100,800 & 24 & $\sim$4,200 & 5 & \ding{51} & $\sim$0.1s & DVS & \ding{55} \\ 
PAF \citep{miao2019neuromorphic}& 2019 & DAVIS346 & 346$\times$260 & Action & 450 & 10 & $\sim$45 & 10 & \ding{51} & $\sim$5s & DVS & \ding{55} \\ 

HMDB-DVS \citep{iacc2}& 2019 & DAVIS240c & 240$\times$180 & Action & 6,766 & 51 & $\sim$133 & - & \ding{55} & 19s & DVS & \ding{55} \\ 
UCF-DVS \citep{iacc2}& 2019 & DAVIS240c & 240$\times$180 & Action & 13,320 & 101 & $\sim$132 & - & \ding{55} & 25s & DVS & \ding{55} \\ \
DailyAction \citep{Liu2021EventbasedAR}& 2021 & DAVIS346 & 346$\times$260 & Action & 1,440 & 12 & $\sim$120 & 15 & \ding{51} & $\sim$5s & DVS & \ding{55} \\ 
HARDVS \citep{wang2022hardvsrevisitinghumanactivity} & 2022 & DAVIS346 & 346$\times$260 & Action & 107,646 & 300 & $\sim$359 & 5 & \ding{51} & $\sim$5s & DVS & \ding{55} \\ 
T$HU^{E-ACT}-50$ \citep{Action2023}& 2023 & DAVIS346 & 346$\times$260 & Action & 2,330 & 50 & $\sim$47 & 18 & \ding{51} & 2-5s & DVS & \ding{55} \\
Bullying10K \citep{Dong2023Bullying10KAL}& 2023 & DAVIS346 & 346$\times$260 & Action & 10,000 & 10 & $\sim$1,000 & 25 & \ding{51} & 2-20s & DVS & \ding{55} \\ 
DailyDVS-200 \citep{wang2024dailydvs}& 2024 & DVXplorer Lite & 320$\times$240 & Action & 22,046 & 200 & $\sim$110 & 47 & \ding{51} & 1-20s & DVS \newline + RGB & \ding{51} \newline (RGB Only) \\ \hline
\textbf{SPACT18} & 2024 & Spike, RGB, Thermal & 250$\times$400 (Spike), \newline 1920$\times$1080 (RGB), \newline 1440$\times$1080 (Thermal) & Action & 3,168 & 18 & $\sim$176 & 44 & \ding{51} & $\sim$5s & Spike \newline + RGB\newline  + Thermal & \ding{51} \newline (All Modalities) \\ \hline
\end{tabular}%
}

\label{table:event_datasets}
\end{table*}

\begin{table}[ht!]
\centering
\caption{Comparison of Action Recognition Datasets}
\begin{adjustbox}{max width=\textwidth}
\begin{tabular}{@{}lcccccc@{}}
\toprule
\textbf{Feature}                 & \textbf{HMDB-51}          & \textbf{UCF-101}         & \textbf{Kinetics-400}         & \textbf{DVS128 Gesture} & \textbf{SPACT18}                              \\ 
\midrule
\textbf{Data Sources}            & Movies, web videos        & YouTube videos           & YouTube videos                & DVS recordings           & Human Subjects                                    \\
\textbf{Modality}                & RGB                       & RGB                      & RGB                           & Event-based              & Spike, RGB, Thermal                               \\
\textbf{Resolution}              & Varies (low-res)          & 320×240                  & 340×256 (average)             & 128×128                  & \makecell{250×400 (Spike) \\ 1920×1080 (RGB) \\ 1440×1080 (Thermal)} \\
\textbf{Number of Classes}       & 51                        & 101                      & 400                           & 11                       & 18                                              \\
\textbf{Number of Samples}       & 6,766                     & 13,320                   & 306,245                       & 1,342                    & 3,168                                           \\
\textbf{Avg. Duration per Sample} & Varies                   & Varies                   & $\sim$10 seconds              & $\sim$6 seconds          & $\sim$5 seconds                                  \\
\textbf{Samples per Class}       & $\sim$133                 & $\sim$132                & $\sim$765                     & $\sim$122                & $\sim$176                                       \\
\bottomrule
\end{tabular}
\end{adjustbox}
\label{tab:dataset_comparison}
\end{table}

\subsection{Training Hyper-Parameters and Hardware Specifications}

\begin{table}[ht]
    \centering
    \caption{Technical specifications for each camera used in data collection. }
    \label{tab:thermal Camera specifications}
    \begin{tabular}{cccc}
    \toprule 
    &Thermal Camera &  RGB Camera & Spike camera \\
    \midrule
    
    Resolution & $1440 \times 1080$ & $1920 \times 1080$ & $250 \times 400$ \\

    Temperature Range & $-20$°C to $120$°C & - & - \\

    Frame Rate & 8.7 Hz & 60 Hz & 20,000 Hz \\

    Compatibility & Android (USB-C) & iOS(iPhone 14 Pro) & Windows (Laptop) \\

    Manufacturer & FLIR ONE Pro & Apple & Spike Camera-001T-Gen2 \\
    
    \bottomrule
    \end{tabular}
\end{table}

\begin{table}[ht]
\centering
\caption{Training key hyperparameters for Thermal, RGB, and Spiking Data.}
\label{hyper}
\begin{tabular}{lccc}
\toprule
\textbf{}             & \textbf{Thermal} & \textbf{RGB} & \textbf{Spiking} \\ \midrule
\textbf{Number epochs}       & 100                & 100               & 100   \\ 
\textbf{Mini batch size}     & 8                 & 8                & 8    \\ 
\textbf{Learning Rate}                   & $10^{-2}$                 & $10^{-2}$                 & $10^{-3}$      \\ 
\textbf{Optimizer}        & Adam                & Adam                 & AdamW   \\
\textbf{Rate Scheduler}          & StepLR                  & StepLR                 & StepLR     \\ 
\textbf{Weight Decay}       & $10^{-3}$                  & $10^{-3}$                 & 0.1     \\ 
\textbf{Loss Function}       & Cross-Entropy                & Cross-Entropy               & Cross-Entropy   \\ 
\bottomrule
\end{tabular}
\end{table}
\par Table ~\ref{tab:thermal Camera specifications} presents the technical specifications of thermal, RGB, and spike camera.  Table ~\ref{hyper} reports the training hyper-parameters used across the three datasets modalities for all models, X3d, Uniformer, and MC3.

\subsection{Compression algorithm}

We provide the proof of the main lemma here. Recall that by a firing rate of a spike train over an interval, we mean it's average spike output over the interval. By an interval we mean a sequence of consecutive time steps. By a limit spike rate of a spike train $s=[s[1],\dots,s[t],\dots]$ we mean the value (if it exists)
$$
\lim_{t\to\infty}\frac{\sum_{i=1}^t s[i]}{t}.
$$
\lem*

\begin{proof}
    Since the input to the neuron is constant, the exact number of firing during an interval of the form $[1,\dots,t]$ is given by $\lfloor t\cdot c\rfloor$. The firing rate over the same interval is then 
    \begin{equation}\label{eq gen rate}
    \frac{\lfloor t\cdot c\rfloor}{t}.
    \end{equation}
On the other side, for $t=d$, the expression \ref{eq gen rate} is exactly what the IF neuron will receive during the compression in the fist time step.  After $l$ steps of compression, the total input to the neuron used for compression will be 
    $$
    \frac{\sum_{i=0}^{l-1}\sum_{t\in I(i)}s[t]}{d}=\frac{\sum_{t=1}^{l\cdot d} s[t]}{d} = \frac{\lfloor ld\cdot c\rfloor}{d},
    $$
    and the firing rate of the compressed spike train over the interval $[1,\dots, l]$ will be 
    \begin{equation} \label{eq compressed rate}
       \frac{1}{l} \Big\lfloor\frac{\lfloor ld\cdot c\rfloor}{d}\Big\rfloor.
    \end{equation}
    The result follows upon comparing the equations \ref{eq gen rate} and \ref{eq compressed rate}, and taking the limits $t\to\infty$ and $l\to\infty$, respectively. 
\end{proof}

The previous results does not have any prior assumptions on the constant input $c$, except it being nonnegative (negative input $c$ will yield zero spikes in both original and compressed sequence). However, if we assume that $c$ is of the form $\frac{p}{q}$, where $p,q$ are integers and $p>0,q\neq 0$, we can say more. Namely, taking $t=q$ in equation \ref{eq gen rate}, yields that the firing rate of the original spike train is $\frac{p}{q}$, which is the limit firing rate too.

\newpage
\subsection{Qualitative Results}
\label{qual_sup}
\begin{figure*}[ht]
    \centering
    \includegraphics[width=\textwidth]{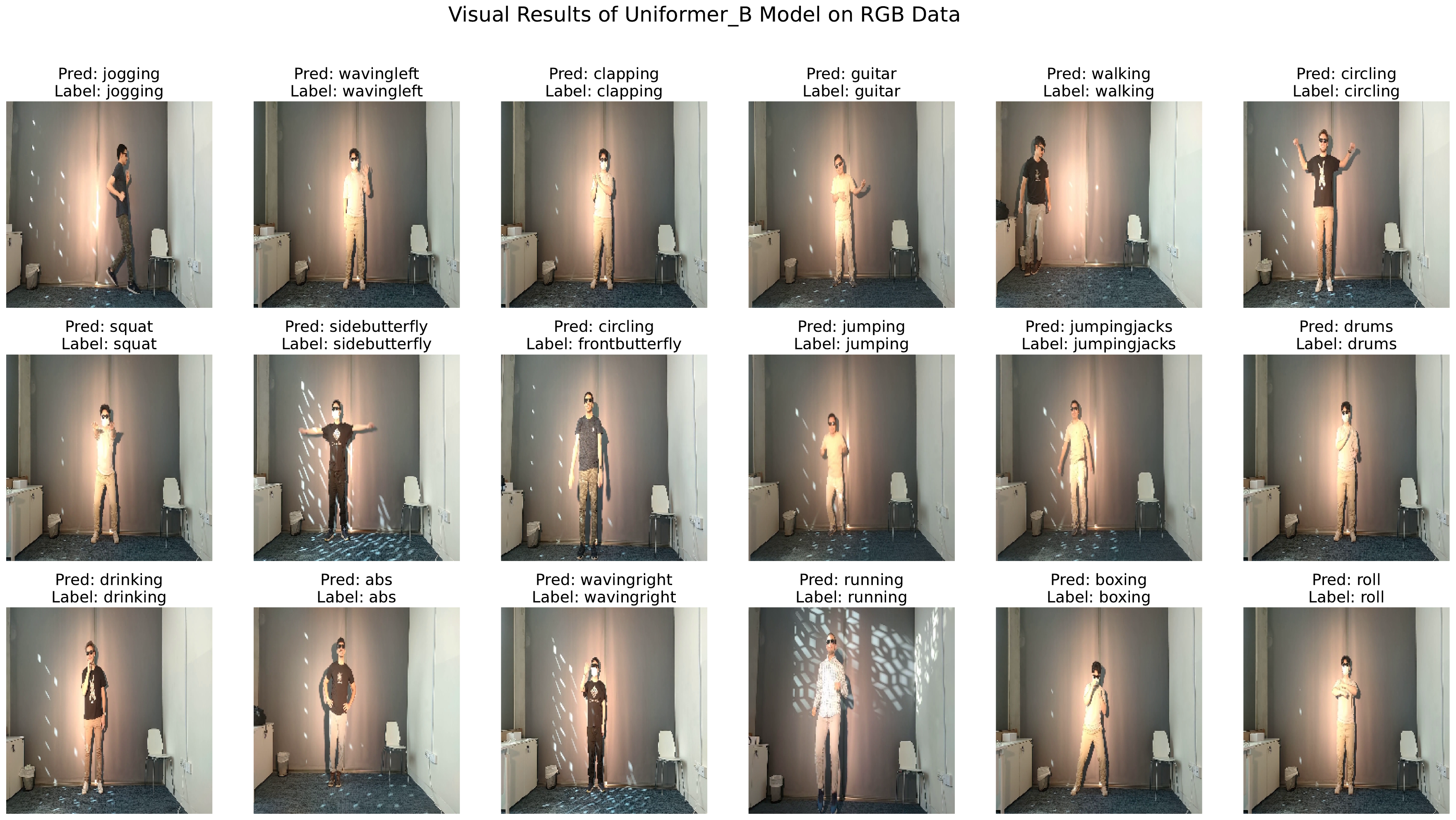}
    \captionof{figure}{The figure shows visual results from the $Uniformer_B$ model applied to various activities recorded with RGB camera. Each image displays the predicted activity (Pred) and the ground truth label (Label) for comparison.}
    \label{fig:apd1}

\end{figure*}
\begin{figure*}[ht]
    \centering
    \includegraphics[width=\textwidth]{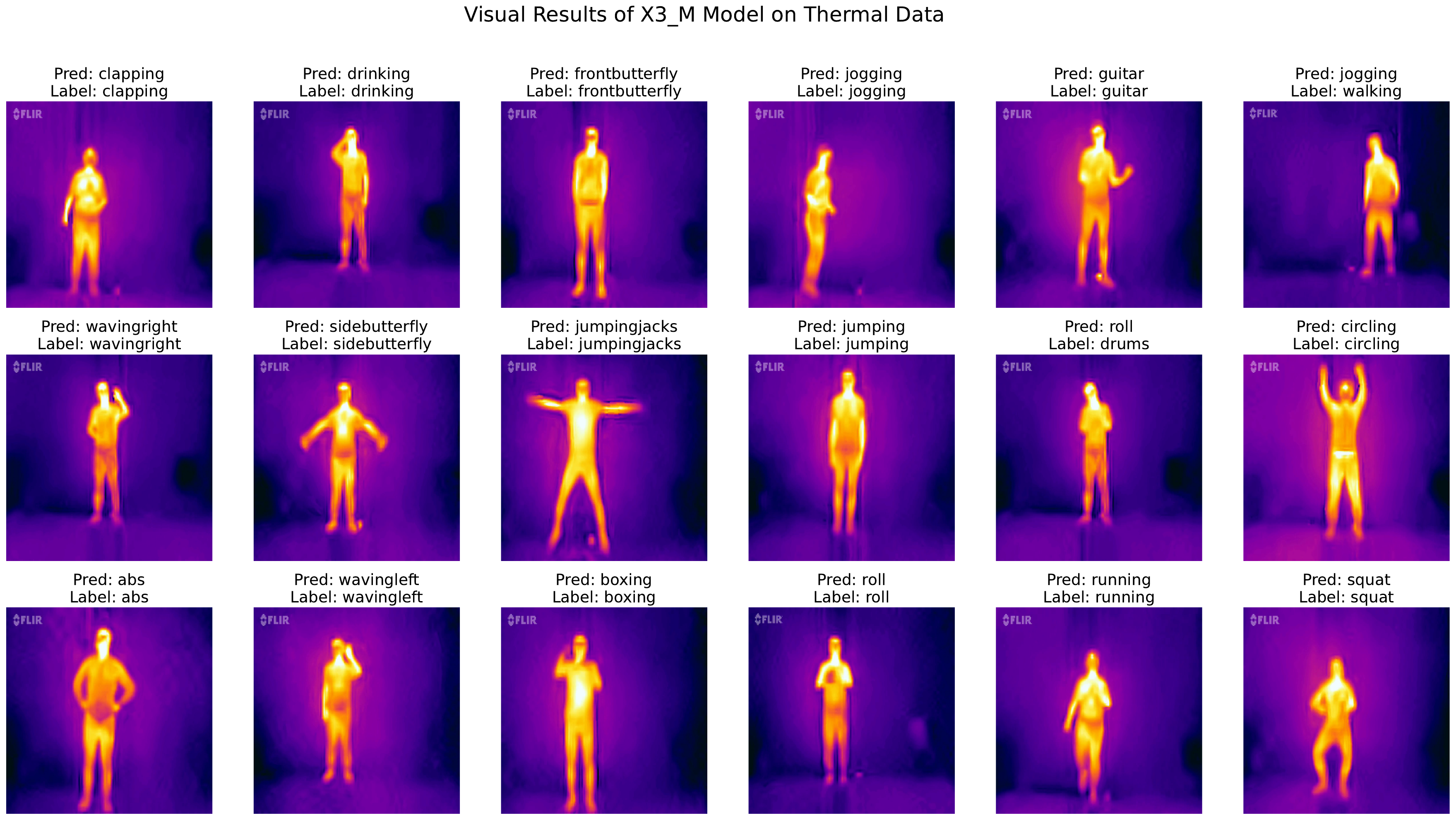}
    \captionof{figure}{The figure shows visual results from the $X3D_M$ model applied to various activities recorded with Thermal camera. Each image displays the predicted activity (Pred) and the ground truth label (Label) for comparison.}
    \label{fig:apd4}

\end{figure*}

\newpage
\begin{figure*}[ht]
    \centering
    \includegraphics[width=0.70\textwidth]{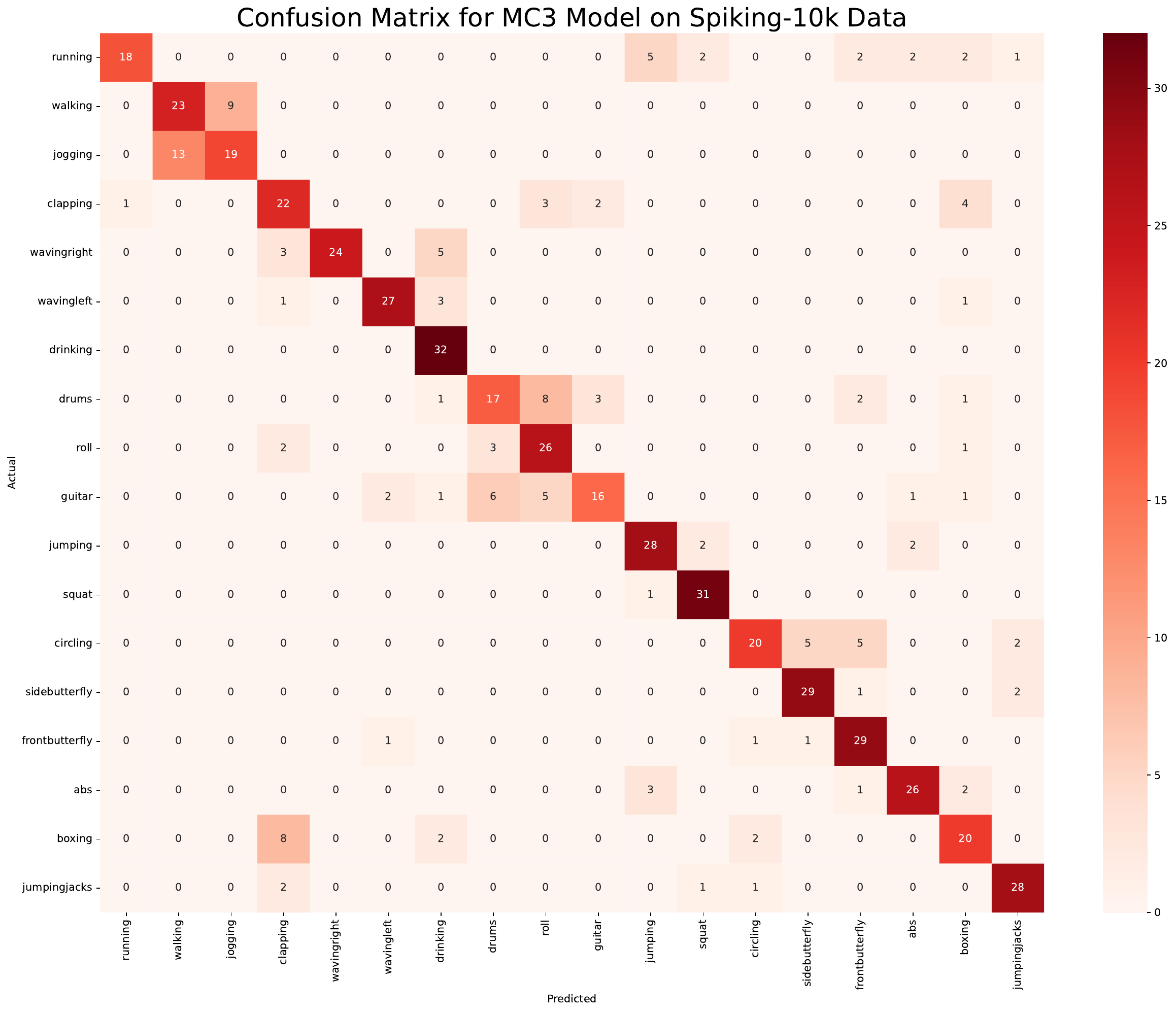}
    \captionof{figure}{Confusion Matrix for $MC3$ baseline model on Spiking-10k Data: The matrix illustrates the model's performance in classifying various activities, with strong diagonal values indicating accurate predictions and some off-diagonal misclassifications, particularly in similar activities as 'walking' and 'jogging'.}
    \label{fig:apd2}

\end{figure*}

\begin{figure*}[ht]
    \centering
    \includegraphics[width=0.70\textwidth]{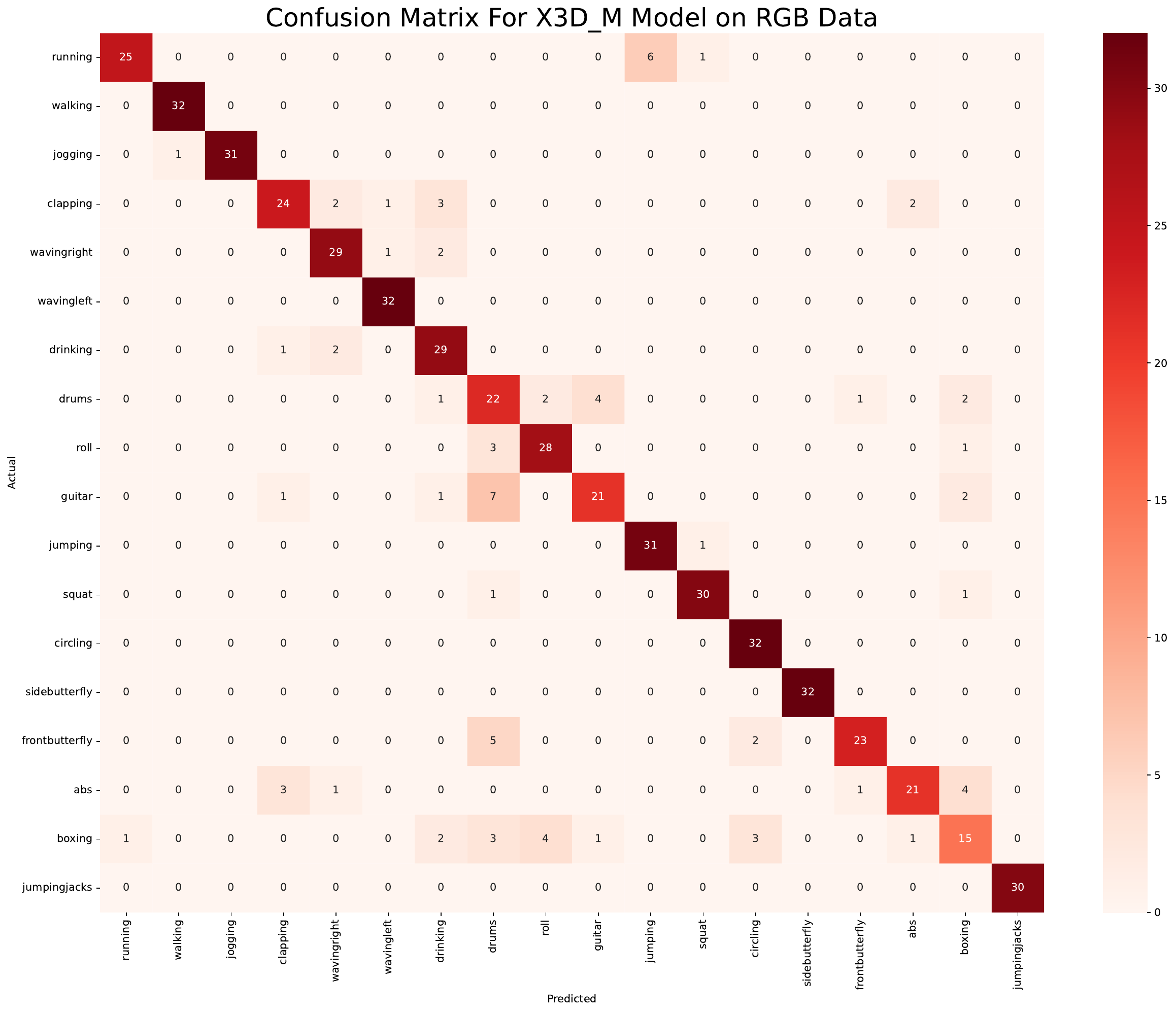}
    \captionof{figure}{Confusion Matrix for $X3D_M$ Model on RGB Data. It shows strong performance in activity classification, with most predictions aligning well with actual labels, though some confusion exists, particularly in close activities like 'drums' and 'guitar.'}
    \label{fig:apd3}

\end{figure*}

\begin{figure*}[ht]
    \centering
    \includegraphics[width=0.78\textwidth]{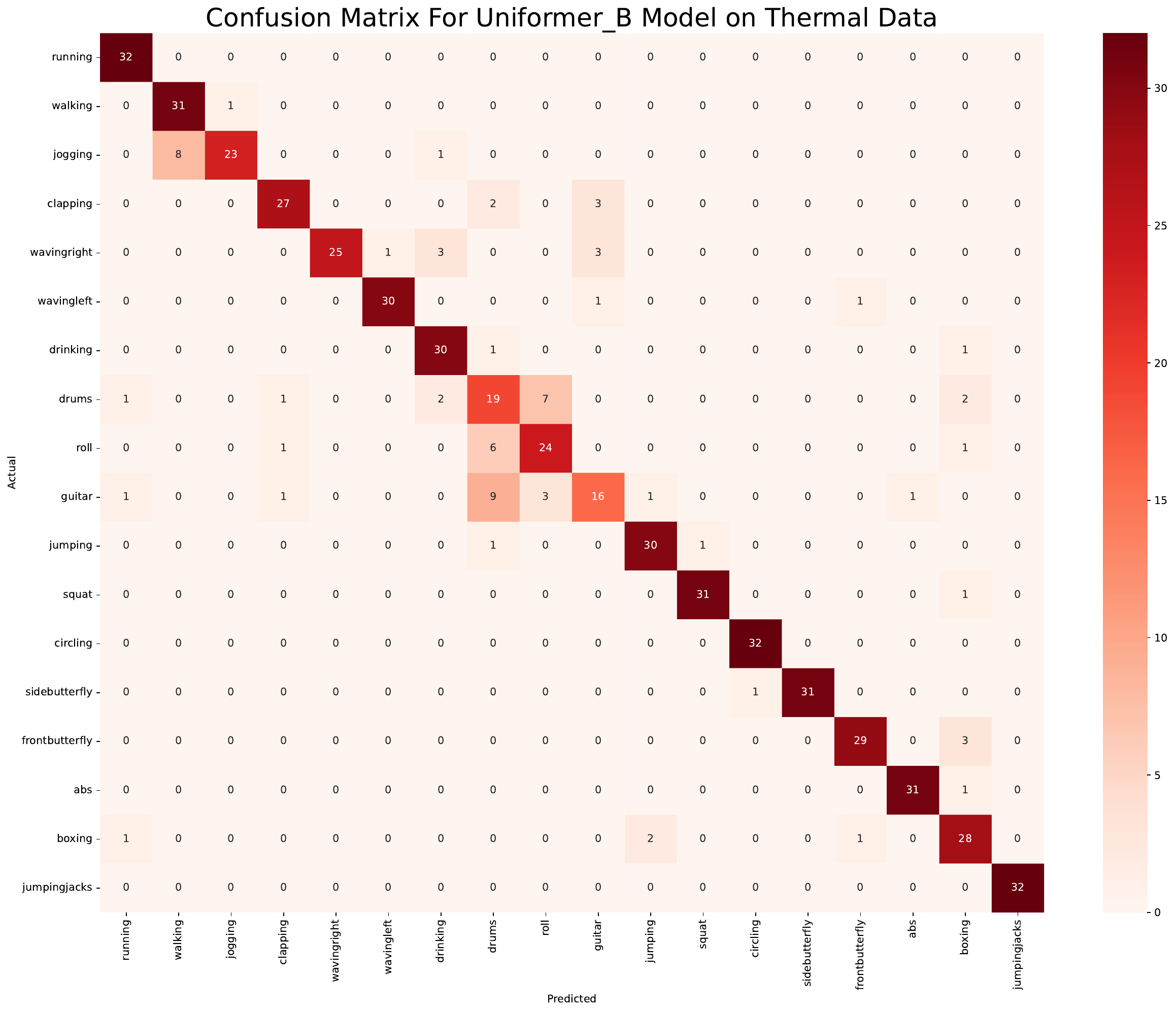}
    \captionof{figure}{Confusion Matrix for $Uniformer_B$ model on Thermal Data: This matrix demonstrates the model's best overall performance, with most activities being accurately classified, particularly for 'running,' 'walking,' and 'jumping,' though minor confusion is observed in activities that are not clear in thermal imaging, like 'drums' and 'guitar.'}
    \label{fig:apd5}

\end{figure*}

\end{document}